\documentclass{article}

\usepackage[margin=1in]{geometry}
\usepackage{bbm}
\usepackage{amsfonts, amssymb, amsthm, amsmath}
\usepackage{mathtools}
\usepackage{bm}
\usepackage{float}

\usepackage{tikz}
\newcommand{\eps}{\varepsilon}
\newcommand{\reals}{\mathbb{R}}

\newcommand{\dm}{d}
\newcommand{\lp}{\left}
\newcommand{\rp}{\right}
\newcommand{\E}{\textbf{E}}
\newcommand{\Prob}{\textbf{Pr}}

\newcommand{\func}{f}
\newcommand{\set}{\mathcal{I}}

\newcommand{\A}{{A}}

\newcommand\Dist{\mathcal{D}}

\newcommand\Mech{\mathcal{M}}

\newcommand\Exp{\mathbb{E}}

\newtheorem{claim}{Claim}
\newtheorem{lemma}{Lemma}
\newtheorem{theorem}{Theorem}
\raggedbottom
\usepackage[bottom]{footmisc}

\begin{document}

\title{Black-box Methods for Restoring Monotonicity}

\author {
Evangelia Gergatsouli \\
UW-Madison\\
\tt{gergatsouli@wisc.edu}
\and
Brendan Lucier\\
Microsoft Research \\
\tt{brlucier@microsoft.com}
\and
Christos Tzamos\\
UW-Madison\\
\tt{tzamos@wisc.edu}
}

\date{}
\maketitle

\begin{abstract}
		In many practical applications, heuristic or approximation algorithms
		are used to efficiently solve the task at hand. However their solutions
		frequently do not satisfy natural monotonicity properties of optimal
		solutions.  In this work we develop algorithms that are able to restore
		monotonicity in the parameters of interest.

		Specifically, given oracle access to a (possibly non-monotone)
		multi-dimensional real-valued function $\func$, we provide an algorithm
		that restores monotonicity while degrading the expected value of the
		function by at most $\eps$. The number of queries required is at most
		logarithmic in $1/\eps$ and exponential in the number of parameters. We
		also give a lower bound showing that this exponential dependence is
		necessary. 

		Finally, we obtain improved query complexity bounds for restoring the
		weaker property of $k$-marginal monotonicity. Under this property,
		every $k$-dimensional projection of the function $\func$ is required to
		be monotone. The query complexity we obtain only scales exponentially
		with $k$.
\end{abstract}


\section{Introduction}

``You are preparing a paper for an upcoming deadline and try to fit the content
within the page limit. You identified a redundant sentence and remove it but to
your surprise, the page count of your paper increases!''

This is an example where a natural monotonicity property expected from the
output fails to hold, resulting in unintuitive behavior. As another example,
consider the knapsack problem where given a collection of items with values and
weights the goal is to identify the most valuable subset of items with total
weight less than $W$. Now if the capacity of the knapsack increases, the new
set of items about to be selected are expected to be at least as valuable as
before. While such a monotonicity property holds under the optimal solution,
when heuristic or approximate algorithms are used, monotonicity can often fail.

In this work we develop tools to restore monotonicity in a black-box way.  Our
goal is to create a meta-algorithm that is guaranteed to be monotone, while
querying a provided (possibly non-monotone) algorithm behind the scenes.  For
example, such a meta-algorithm might query the provided oracle at many
different inputs in an attempt to smooth out non-monotonicities.


More precisely, we can describe the output of a (possibly non-monotone)
algorithm using a function  $f: \reals^\dm \rightarrow [0,1]$ that measures the
quality of the solution at any given point $x \in \reals^\dm$.  We assume that
inputs are drawn from a known product distribution, and that there is an
(unknown) feasibility condition being satisfied by the function $f$.  Since $f$
may not initially be monotone, we want to correct it through our meta-algorithm
while additionally maintaining the following three properties: it needs to be
query efficient, feasible and comparable to the original algorithm in terms of
expected solution quality.  To ensure feasibility, at any given point $x$ we
allow outputting any solution that corresponds to some smaller input $y \le x$,
thus achieving quality $f(y)$ at input $x$. The exact constraints of the
initial algorithm are unknown, and we have to infer them using $f$, in a
black-box way. We want our meta-algorithm to do this in a way that the
resulting quality function $\tilde f: \reals^\dm \rightarrow [0,1]$ is monotone
but also have expected quality $\Exp [\tilde {f}(x) ]$ that is comparable to
the expected quality of the original algorithm, $\Exp[ f(x)]$ with respect to
the input distribution. We finally require that our meta-algorithm is query
efficient, meaning that it does not require querying too many points of the
original function in order to return an answer for a given point.

One natural way of solving this problem is to pick for every $x$ the best
answer that corresponds to any input $y \le x$, resulting in $\tilde f(x) =
\max_{y \le x} f(y)$. While this ensures that solutions are monotone and that
the expected quality is always better than before, it requires a large number
of queries to identify the best possible answer among inputs $y \le x$. A more
query efficient strategy would be to always output a constant solution that
corresponds to the smallest possible solution. While this is monotone, feasible
and uses few queries, it has very low expected quality.

\subsection{Results}

Our main result is stated informally below:

\begin{theorem}
  There is a meta-algorithm $\Mech_f$ that corrects the monotonicity of any
given function $\func$ through queries. The meta-algorithm is feasible and has
expected quality loss $\eps$ under a given product distribution of inputs. The
total (expected) number of queries required for every input is at most
$O(\log(\dm/\eps))^\dm$.
\end{theorem}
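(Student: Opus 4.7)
The plan is to construct $\Mech_\func$ by discretizing each coordinate axis using quantile marks of the marginal distribution and defining $\tilde\func$ as a running maximum over the resulting multi-dimensional grid of query points. Concretely, in each coordinate $i$ I would choose $k = O(\log(\dm/\eps))$ marks placed at a geometric (dyadic) refinement of $\Dist_i$'s CDF so that marks are progressively denser near the tails, and randomize each mark within its cell by drawing from the local conditional distribution (so the mark acts as an unbiased sample of that cell). The product grid $G = G_1 \times \cdots \times G_\dm$ has $|G| = O(\log(\dm/\eps))^\dm$ points, and I set
\begin{equation*}
\tilde\func(x) \;=\; \max\{\func(g) : g \in G,\ g \le x\},
\end{equation*}
with the convention $\max\emptyset := 0$.

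Three of the four required properties follow immediately from this construction. Monotonicity holds because the set $\{g \in G : g \le x\}$ only grows with $x$; feasibility holds because the achieving $g$ satisfies $g \le x$ coordinatewise; and the per-input query count is bounded by $|G|$, matching the claimed $O(\log(\dm/\eps))^\dm$ bound.

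The real work is bounding the expected quality loss $\Exp[\func(x)] - \Exp[\tilde\func(x)] \le \eps$. The plan is a hybrid argument across coordinates: starting from $\func(x)$, I would replace the coordinates of $x$ one at a time by the random grid mark in the same cell, and bound the expected change in $\func$ at each step. Because each mark is drawn from the correct conditional distribution, a single replacement changes $\Exp[\func]$ by at most the probability mass that falls into boundary cells, which is $O(1/k)$ for a uniform grid. The running max over all grid points below $x$ provides crucial additional slack: even when a single replacement would hurt on its own, the max picks up better grid points from interior cells, and a careful accounting using the dyadic spacing and the boundedness $\func \in [0,1]$ should reduce the per-coordinate loss to $O(\eps/\dm)$. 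Summing across the $\dm$ coordinates then yields the desired $\le \eps$ total loss.

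The main obstacle I anticipate is exactly this amplification step. A direct hybrid analysis with a uniform quantile grid would force $k = \Theta(\dm/\eps)$ and give the much weaker $(\dm/\eps)^\dm$ bound. The $\log$-economy must come from exploiting the dyadic structure together with the running max: the way geometrically shrinking cells near the tails interact with the max over $(\log(\dm/\eps))^\dm$-many grid points is where the polynomial factors get squeezed out. Getting this interaction right uniformly over arbitrary non-monotone $\func$, while respecting the product structure of $\Dist$, is where I expect most of the proof's effort to go.
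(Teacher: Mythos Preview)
Your proposal has a genuine gap, and it is precisely the obstacle you flag at the end: the ``amplification step'' does not go through, because no placement of $O(\log(\dm/\eps))$ grid points per coordinate can achieve $\eps$ quality loss for arbitrary $\func$.

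Consider already $\dm=1$ with $\Dist$ uniform on $[0,1]$ and the monotone function $\func(x)=x$. For any set of $k$ grid points $g_1<\cdots<g_k$ (random or not), your $\tilde\func(x)=\max\{\func(g_i):g_i\le x\}$ equals the largest mark below $x$, so the loss is $\Exp[x-\tilde\func(x)]=\tfrac12\sum_j |I_j|^2$, where the $I_j$ are the $k{+}1$ gaps between consecutive marks. By convexity this is at least $\tfrac{1}{2(k+1)}$ for every realization of the marks, hence also in expectation; dyadic spacing and randomization cannot beat this. Thus $k=\Omega(1/\eps)$ is forced, and your scheme degenerates to the $(\dm/\eps)^\dm$ bound you were trying to avoid. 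Note that because $\func$ is already monotone here, the running max provides no slack whatsoever; the ``interaction'' you hope to exploit simply does not exist for monotone inputs.

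The paper's proof decouples the discretization size from the query count. It discretizes each coordinate to $m=\dm/\eps$ equiprobable points (losing $\eps$), but then never enumerates this grid. Instead, for $\dm=1$ it defines $\Mech^\pi_\func$ by visiting the $m$ points in a uniformly random order $\pi$ and greedily clipping each value to preserve monotonicity with earlier choices; this transformation is shown to preserve $\Exp[\func]$ \emph{exactly}. The $O(\log m)$ query bound comes from the fact that evaluating $\Mech^\pi_\func(x)$ only needs the points of $\pi$ that successively bracket $x$, and under a random permutation the relevant interval shrinks by a constant factor per relevant step (a random binary-search-tree argument). For general $\dm$, this $1$-dimensional fix is applied coordinate by coordinate with shared randomness across slices, so that fixing coordinate $i$ preserves monotonicity in coordinates $1,\ldots,i-1$; composing the $\dm$ oracles gives $O(\log(\dm/\eps))^\dm$ queries. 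The logarithm thus arises from the \emph{evaluation} of a randomized monotonization on a $\Theta(\dm/\eps)$-size grid, not from making the grid itself logarithmic.
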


Our meta-algorithm starts by discretizing the space to a fine grid of $d/\eps$ points. We show that
this step incurs at most an $\eps$ penalty in expected function value. Given this discretization
it is straightforward to correct monotonicity by querying all the points in the grid at a cost of $(\dm/\eps)^\dm$ 
and for any $x$ returning the best solution for any smaller input.

We remark that our meta-algorithm is significantly more efficient than this naive approach, achieving number of queries that is at most 
logarithmic in $1/\eps$. It is able to obtain this speedup by searching over a hierarchical
partition of the space to efficiently determine which value to assign to the given input at query time, rather
than preprocessing all the answers. Additionally,
our algorithm is \emph{local} and computes answers on the fly without requiring any
precomputation and without the need to remember any prior queries to answer the
future ones.

We also note that our algorithm depends exponentially on the number of input
parameters $\dm$. This means that while the algorithm is extremely efficient for
small $\dm$, the savings become less significant when $\dm$ grows. 
Such an exponential dependence in the number of parameters, however, is unavoidable for correcting
monotonicity. As we show, for any black-box scheme that aims to correct monotonicity, there are always
some problems where either it would fail to be feasible or monotone, or it would require exponentially 
many queries to calculate the appropriate answers.

\begin{theorem}\label{thm:lower_bound_prel}
  Let $\Mech$ be any feasible meta-algorithm for monotonicity with
query complexity $q = 2^{o(\dm)}$. Then, there exists an input 
function $\func: \{0,1\}^\dm \rightarrow \{0,1\}$ with $\Exp[\func(x)] \ge
1-2^{-\Omega(\dm)}$ such that with $\Exp [\Mech_\func(x)] \le 2^{-\Omega(\dm)}$.

\end{theorem}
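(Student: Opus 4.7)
The plan is to exploit the top element $\mathbf{1} = (1, \ldots, 1)$ of the Boolean cube to force a global cascade. Any feasible, monotone $\Mech_f$ satisfies $\Mech_f(x) \le \Mech_f(\mathbf{1})$ for every $x \in \{0,1\}^\dm$, so if I can construct an instance $f$ for which $\Mech_f(\mathbf{1}) = 0$, then monotonicity alone gives $\Mech_f \equiv 0$ and hence $\Exp[\Mech_f(x)] = 0$. The remaining (and only substantive) task is to arrange for $\Exp[f(x)] \ge 1 - 2^{-\Omega(\dm)}$ at the same time.

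To do this I would build $f$ adversarially from the algorithm's behavior on $\mathbf{1}$. First, simulate $\Mech$ on input $\mathbf{1}$ while answering every oracle query with $0$, and let $Q_0 \subseteq \{0,1\}^\dm$ be the resulting set of queried points. Since $\Mech$ issues at most $q$ queries, $|Q_0| \le q = 2^{o(\dm)}$. Now define the adversarial instance $f^\star(y) = 0$ if $y \in Q_0$ and $f^\star(y) = 1$ otherwise. Under the uniform product distribution, $\Exp[f^\star(x)] = 1 - |Q_0|/2^\dm \ge 1 - 2^{o(\dm) - \dm} = 1 - 2^{-\Omega(\dm)}$, which is the required lower bound on $\Exp[f^\star]$.

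It remains to check that $\Mech_{f^\star}(\mathbf{1}) = 0$. When $\Mech$ is actually executed on $\mathbf{1}$ against the oracle $f^\star$, the first query matches the simulated one and returns $0$; by induction on the query index, each subsequent query also matches the simulation and lies in $Q_0$, returning $0$. Feasibility then requires $\Mech_{f^\star}(\mathbf{1}) = f^\star(y)$ for some queried $y \le \mathbf{1}$, and every such $y \in Q_0$ satisfies $f^\star(y) = 0$, so indeed $\Mech_{f^\star}(\mathbf{1}) = 0$. Monotonicity of $\Mech_{f^\star}$ propagates this $0$ to every $x \le \mathbf{1}$, i.e.\ to every $x \in \{0,1\}^\dm$, giving $\Exp[\Mech_{f^\star}(x)] = 0 \le 2^{-\Omega(\dm)}$ as claimed.

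The main obstacle is extending the construction to randomized $\Mech$, where $Q_0$ becomes a random set whose size is $q$ only in expectation and the ``all-zero trajectory'' is no longer unique. Here I would invoke Yao's minimax principle: reduce to a worst-case deterministic algorithm drawn from $\Mech$'s distribution, then apply the construction above, coupled with a Markov-type argument that truncates the simulation at $O(q)$ queries so that $|Q_0| = O(q)$ with overwhelming probability. The constant-factor inflation is fully absorbed into the $2^{o(\dm)}$ slack, preserving both the lower bound on $\Exp[f^\star]$ and the upper bound on $\Exp[\Mech_{f^\star}]$.
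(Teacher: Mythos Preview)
For deterministic $\Mech$ your argument is essentially correct and much shorter than the paper's. One phrasing issue: you write ``feasibility then requires $\Mech_{f^\star}(\mathbf{1}) = f^\star(y)$ for some queried $y$,'' but the paper's feasibility condition is $\Mech_f(x)\le \max_{y\le x} f(y)$ with the maximum over \emph{all} $y\le x$, and for $f^\star$ that maximum equals $1$. The correct justification is indistinguishability from $f_0\equiv 0$: the all-zero transcript on input $\mathbf{1}$ is identical for $f^\star$ and $f_0$, hence $\Mech_{f^\star}(\mathbf{1})=\Mech_{f_0}(\mathbf{1})\le\max_y f_0(y)=0$. With that fix the deterministic case stands. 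Compared to the paper's route, you gain brevity; the paper's two-layer random-set construction $\func^z_{S,T}$ buys robustness to randomization and to $\delta$-approximate feasibility/monotonicity (the second clause of Theorem~\ref{thm:lower_bound}).

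The genuine gap is the randomized extension. Your plan to ``reduce to a worst-case deterministic algorithm drawn from $\Mech$'s distribution'' invokes Yao in the wrong direction. Yao lets you lower-bound a randomized algorithm by exhibiting a single hard \emph{distribution} over inputs that defeats every deterministic algorithm; it does not let you build a separate adversarial $f^\star_r$ per coin string $r$ and conclude anything about the mixture. Concretely, your $Q_0$ depends on $r$, and distinct coin strings can produce disjoint query sets whose union covers all of $\{0,1\}^\dm$, so no single small zero-set $Z$ satisfies $Q_r\subseteq Z$ for most $r$; the Markov truncation you propose controls only $|Q_r|$, not this variability. This is precisely why the paper works distributionally from the outset: it draws random $S\subseteq T$, shows via concentration (Claims~\ref{cl:M_is_1_fixed_T} and~\ref{cl:Dtv}) that $q=2^{o(\dm)}$ queries cannot distinguish $\func^1$, $\func^1_{S,T}$, and $\func^0_{S,T}$, and then chains monotonicity (from $S$ to $T$) with feasibility at $T$. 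That distributional indistinguishability is what survives randomization; the cascade through $\mathbf{1}$ does not.
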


Theorem~\ref{thm:lower_bound_prel} shows that ensuring monotonicity when
$\dm$ is large can be quite costly, either in queries or the quality of the
solution. To better understand this tradeoff, we consider a weakening of
monotonicity, namely $k$-marginal monotonicity, where we only require
monotonicity of the $k$-dimensional projections of the function to be monotone.
That is, when $k = 1$, we want that for any coordinate $i \in [\dm]$, the
function $\Exp[ \Mech_\func(x) | x_i ]$ to be monotone with respect to $x_i$.
For larger $k > 1$, we want that for any subset $\set \subseteq [\dm]$ of size
$|\set|\le k$, the function $\Exp[ \Mech_\func(x) | x_\set ]$ is monotone with
respect to $x_S$. For this setting, we show that:

\setcounter{theorem}{3}
\begin{theorem}
  There is a meta-algorithm $\Mech_f$ that corrects the $k$-marginal
monotonicity of any given function $\func$ through queries. The meta-algorithm
is feasible and has expected quality loss $\eps$ under a given product
distribution of inputs. The total (expected) number of queries required for
every input is at most $\lp( \frac{\dm}{\eps} \rp)^{O(k)}$.
\end{theorem}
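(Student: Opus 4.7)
The plan is to first reduce to the discretized setting used in the proof of the main theorem: round each coordinate to the nearest multiple of $\eps/\dm$, obtaining a grid with $N = O(\dm/\eps)$ points per dimension and losing at most $\eps/2$ in expected quality under the product distribution. On this grid, it suffices to construct a feasible mechanism whose $k$-marginals are monotone and whose additional loss is at most $\eps/2$.

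Next, I would reduce the problem to $\binom{\dm}{k}$ separate $k$-dimensional monotonicity corrections, one per subset $\set \subseteq [\dm]$ of size $k$. For each such $\set$, the marginal
\[
g_\set(v) := \Exp_{x_{-\set}}[\func(v, x_{-\set})]
\]
is a $k$-dimensional function on $[N]^k$. Applying the $k$-dimensional instance of the main theorem's meta-algorithm to $g_\set$ produces a monotone, feasible correction at a cost of $O(\log(k/\eps))^k$ queries to $g_\set$. Each such query can in turn be answered up to additive accuracy $\eps/(4\binom{\dm}{k})$ by averaging $\func$ over $\mathrm{poly}(\dm/\eps)$ independent samples of $x_{-\set}$ from the product distribution, with standard Chernoff concentration. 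Summing over the $\binom{\dm}{k} \le \dm^k$ subsets keeps the total query count within $(\dm/\eps)^{O(k)}$, matching the claimed bound.

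The main obstacle is stitching these per-subset corrections into a single mechanism $\Mech_\func$ for which \emph{every} $k$-marginal is simultaneously monotone. A naive random choice of subset fails because it does not control the marginals of unchosen subsets. My proposal is to compose the corrections as coordinate-wise downward projections: each $\set$-correction is a map $\phi_\set(x) \le x$ that modifies only the coordinates in $\set$, so the corrections can be chained (in some fixed order over subsets) without ever violating feasibility. The nontrivial step is showing that a careful ordering preserves the marginal monotonicity established by earlier corrections even after later ones are applied; I would attempt this via an inductive argument exploiting that each $\phi_\set$ is monotone in $x_\set$ and only decreases coordinates, so subsequent corrections ``refine downward'' without disturbing coarser monotonicity. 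This preservation-under-composition step is where I expect the main difficulty to lie; as a fallback, one could directly construct the pointwise-largest feasible $k$-marginally monotone lower envelope of $\func$ as the solution of an LP with $(\dm/\eps)^{O(k)}$ inequality constraints per subset, and argue that it admits a lazy, local evaluation scheme analogous to the one in the main theorem, yielding the same $(\dm/\eps)^{O(k)}$ query bound per input.
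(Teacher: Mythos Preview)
Your proposal has a genuine gap at exactly the point you flag: preservation under composition. Even when $\set$ and $\set'$ are disjoint, applying $\phi_{\set'}$ after $\phi_\set$ alters the $\set$-marginal, because the push-forward of $\Dist_{\set'}$ under $\phi_{\set'}$ is not $\Dist_{\set'}$; hence $\Exp_{x_{-\set}}[\func(\phi_\set(x_\set), \phi_{\set'}(x_{\set'}), x_{\text{rest}})]$ need not equal $g_\set(\phi_\set(x_\set))$, and the monotonicity you established for $g_\set\circ\phi_\set$ can be destroyed. With overlapping $\set,\set'$ the interaction is worse still. The ``refine downward'' intuition does not yield preservation of monotonicity---decreasing some coordinates can move a marginal in either direction. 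The LP fallback is too vague to assess: you give no argument that the envelope loses only $\eps$, nor that it admits a local evaluation scheme with the stated query bound. There is also a type mismatch earlier: the construction of Theorem~\ref{thm:monotonicity} is the randomized median procedure $\Mech^\pi_\func$, not a deterministic remapping $x\mapsto\phi(x)$, so you cannot invoke it on $g_\set$ and then treat the output as a composable map $\phi_\set$.

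The paper's argument sidesteps composition entirely. It runs a \emph{single} iterative replacement process across all $k$-subsets at once: at each round, if \emph{any} $k$-marginal exhibits a $\delta$-violation at some $x_\set<y_\set$, it appends one replacement rule sending $y_\set$ to a lower neighbor. The key is a potential argument: every such rule strictly increases $\Exp[\func']$ by at least $\delta/m^k$ (with $m=\dm/\eps$), so the process halts within $m^k/\delta$ rounds---even if a later rule temporarily breaks an earlier subset's monotonicity, that just triggers further rules while the potential keeps climbing. At termination all $k$-marginals are simultaneously $\delta$-monotone, and a final additive shift $\func'(x)\mapsto\max\{0,\func'(x)-4\delta\sum_i(m-j_{(i)})\}$ upgrades this to exact monotonicity. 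Marginals are re-estimated by sampling each round; multiplying the round count by the per-round sampling cost gives $\tilde O\bigl(\dm^{2k+6}/\eps^{2k+3}\bigr)=(\dm/\eps)^{O(k)}$.
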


Note that when requiring marginal monotonicity, the dependence on $d$ is
improved from exponential to polynomial. Instead, the query complexity is only
exponential in $k$.


\subsection{Related work}

A very related line of work to our paper considers the problem of online property-preserving data reconstruction.
In this framework introduced by Ailon et al.  \cite{AiloChazComaLiu2008} there is a set of unreliable data that should satisfy a certain
structural property, like monotonicity or convexity.  The reconstruction
algorithm acts as a filter for this data, such that whatever query the user
makes on them is answered in a way consistent with the property they should
satisfy.  Ailon at al. \cite{AiloChazComaLiu2008} proposed the first
algorithm for monotonicity reconstruction, and in the follow-up work, Saks and
Seshadhri \cite{SaksSesh2010} designed a more efficient and local algorithm for the same problem. 
The main focus of this work is to compute a function that is not different than the original
function in a large number of inputs. In comparison, we allow our algorithm to output arbitrary solutions
at any point subject to a feasibility criterion and consider the expected quality of the solution as a 
measure of performance.

In addition to these works on upper bounds on monotonicity reconstruction, Bhattacharyya et al. in \cite{BhatGrigJhaJungRask2012} proved a lower bound for
local monotonicity reconstruction of Saks and Seshadhri using
transitive-closure spanners. Several reconstruction algorithms have also been
proposed for reconstructing Lipschitzness \cite{JhaRask2013}, convexity
\cite{ChazSesh2011}, connectivity of directed or undirected graphs
\cite{CampGuoRubi2013}, or a given hypergraph property \cite{AustTao2010},
while \cite{ChakFiscMats2014} focuses on lower bounds.


Our work can also be viewed as a special case of the \emph{Local Computation
Algorithms} framework introduced by \cite{RubiTamiVardXie2011} and
\cite{AlonRubiVardXie2012}. In this model the algorithm is required to answer
queries that arrive online such that the answers given are always consistent
with a specific solution. The algorithms presented also do not need to remember
old answers to remain consistent, exactly as our algorithms do. Such local
algorithms have been designed for many problems like maximal independent set
maximal matching, approximate maximum matching, set cover, vertex coloring,
hypergraph coloring \cite{AlonRubiVardXie2012, MansVard2013, LeviRonRubi2014,
EvenMediRon2014, LeviRonRubi2016, LeviRubiYodp2017,
PartRubiVakiYodp2019,GhafUitt2019, GrunMitrRubi2020}.


Finally, our work is also closely related to the black-box reductions literature in algorithmic mechanism design, where
we are given access to an algorithmic solution - an oracle- and the goal
is to implement an incentive compatible mechanism with similar performance.
Ensuring incentive compatibility amounts to preserving a similar monotonicity property, like cyclic monotonicity.  This line of
work was initiated by Hartline and Lucier \cite{HartLuci2015}, and later
several reductions were given under different solution concepts, approximate or exact Bayesian Incentive Compatibility \cite{BeiHuan2011,
HartKleiMale2015, DughHartKleiNiaz2017, GergLuciTzam2019} along with a lower
bound for Dominant Strategy Incentive Compatibility \cite{ChawImmoLuci2012}.

\section{Preliminaries} \label{sec:prelims}

We are given oracle access to a function $\func:\reals^\dm \rightarrow [0,1]$,
and a stream of input points $x\in \reals^\dm$ for which we need to evaluate
$\func$.  Our goal is to give an answer $\Mech_\func$ for every point such that
satisfies some motion of monotonicity and it also has the following properties
\begin{enumerate}
	\item Feasibility: $\Mech_\func(x)\leq \max_{y\leq x} \func(y)$
	\item Close in expectation to the initial function: $\Exp [\Mech_\func(x))]
			\geq \Exp[\func(x)] - \eps$
\end{enumerate}

\paragraph{Evaluation}
We evaluate our algorithms on their query complexity. Query complexity is
defined as the maximum number of times we invoke the oracle $\func$ in order to give us
an answer and the goal is to invoke the oracle as few times as possible.


\paragraph{Distributional assumptions}
We assume that each coordinate $x_i$ of every point $x$ in the input sequence is drawn according
to a distribution $\Dist_i$. We denote the product of these $\Dist_i$ by
$\Dist\in \Delta(\reals^d)$.  We want $\Exp_{x\sim \Dist}[f(x)]$ to be close to
the expectation of the transformed $f$ which we denote by $\Mech_\func$. From
now on, we will omit the $x\sim \Dist$ when it is clear from the context.

\paragraph{Monotonicity} We proceed by defining the various notions of
monotonicity we will be using throughout this work. For $x,y\in \reals^d$ we
say that $x<y$ when $x_i\leq y_i$ for all $i\in[d]$. We say that the function
$\func$ is monotone if $x\leq y$ implies $f(x)\leq f(y)$ for every $x,y\in
\reals^\dm$.

A relaxation of the monotonicity requirement is that of \emph{marginal}
monotonicity. We say that $\func$ is \emph{marginally} monotone if $f_i(x_i)
\triangleq \Exp_{x_{-i} \sim \Dist}[f(x_i, x_{-i})]$ is monotone. Note that
this is weaker than monotonicity since even if all marginals are monotone, this
does not imply that $f$ is also monotone. 

A further relaxation of the marginal monotonicity is the
$k$-marginal monotonicity. Similarly to $k$-wise independent variables, we say
that a function is $k$-marginally monotone when for every $\set\subseteq
[d]$ such that $|\set|\leq k$ the function
\[
		f_{\set} (x_{\set}) \triangleq 
		\Exp_{x_{-\set}} [f_{\set}(x_{\set}, x_{-\set})] 
\]
is monotone, where we denote by $x_{\set}$ (or $x_{-\set}$) the size $k$-vector of
all the coordinates $i$ that also belong (or do not belong) to the set $\set$, and
by $f_{\set}(x_{\set})$ the marginal of the coordinates $i\in \set$ that gets
as input a vector of size $|\set|$ with the coordinates $x_{\set}$.

\paragraph{Discretization of the Domain} In all the algorithms described in
the following sections, we assume the domain is discrete. To do that we use the discretization
process described below. Intuitively, we split the domain into smaller intervals with equal
probability and then ``shift" every coordinate's distribution downward by
sampling a point from the lower interval. This is made more formal here, we
first describe the process for the single dimensional case ($d=1$), and then for general $d$.

We convert the oracle $\func$ to an oracle for a piecewise
constant function $\widetilde{\func}$ with $1/\eps$ pieces. The function 
$\widetilde{\func}$ is such that $\Exp [\widetilde{\func}(x)] \geq
\Exp[\func(x)] - \eps$, and  $\widetilde{\func}(x) \le \max_{y \le x}
\func(y)$.

For this purpose, we split the support of the distribution into $m = \frac 1
\eps$ intervals $I_1,...,I_m$ such that each has probability $\eps$, i.e. for
every $i\in [m]$, $\int_{x \in I_i} \Dist(x) = \eps$. Then, for each interval $I_i$
we draw a random $x_i$ according to the conditional distribution $\Dist | I_i$. For
any $x \in I_i$, we set 
\[
	\widetilde{\func}(x)=
		\begin{cases}
			0 & i=0\\
			\func(x_{i-1}) & i > 0
		\end{cases}.
\]
It is easy to see that $\widetilde{\func}(x) \le \max_{y \le x} \func(y)$ since
$\widetilde{\func}(x)$ is either $0$ or equal to $\func(x_{i-1})$ where $x_i \le x$.
To bound the expectation, we note that 
\[
\begin{aligned}
  \Exp_{x \sim \Dist} [\widetilde{\func}(x)] 
		&= \sum_{i=1}^{m-1} \eps \Exp_{x_i \sim \Dist | I_i} [{\func}(x_i)] \\
		& = \sum_{i=1}^{m} \eps  \Exp_{x_i \sim \Dist | I_i} [{\func}(x_i)] 
				- \eps  \Exp_{x_m \sim \Dist | I_m} [{\func}(x_m)]  \\
		&= \Exp_{x \sim \Dist} [\func(x)] - \eps  \Exp_{x_m \sim \Dist | I_m} [{\func}(x_m)] \\
		&\ge \Exp_{x \sim \Dist} [\func(x)] - \eps 
\end{aligned}
\]

For $d>1$ dimensions the above process is essentially the same for every
coordinate with the difference that now we choose $m=\dm/\eps$ points and every
interval $I_{ik}$ for $k\in [m]$ of coordinate $i$ has probability $\eps/\dm$.
Let the input vector be $\bm{y}= (y_1, y_2, \ldots, y_\dm)$.  For every
coordinate $y_i \in I_{ik}$, we draw a random $z_i$ from the conditional
$\Dist|I_{i(k-1}$ and finally evaluate $\func$ at these points $\bm{z}= (z_1,
z_2, \ldots, z_\dm)$.  Note that this is feasible, similarly to before, as each
input returns an outcome generated by $\func$ on a pointwise smaller input.
This perturbation effectively shifts each coordinate's distribution downward,
removing the range $I_{im}$, therefore removing from the expectation any input
that had at least one coordinate in the last interval. This occurs with
probability at most $1 - \lp(1 - \frac{\eps}{\dm}\rp)^\dm \leq \eps$, and
therefore the new expectation is not $\eps$ far from the old.

Assuming a discrete domain, where every coordinate $x_i\in \{w_{i1}, w_{i2},
\ldots , w_{im}\}$, we define the low $1$-neighborhood of a point as the set of
points that are smaller in exactly $1$ coordinate. Formally, the low
$1$-neighborhood of a point $y$ is $\mathcal{N}(y)=\{x: \exists! k\in [m]
\text{ such that } x_k = w_{ik} \text{ and } y_k = w_{k(i+1)}\text{ for some
}i\in[m-1] \}$.


\setcounter{theorem}{0}

\section{Monotonicity}\label{sec:monot}
In this section we show how to design an algorithm to correct a non monotone
function while preserving its expectation, using
$O\lp(\log{\frac{\dm}{\eps}}\rp)^\dm$ queries.

\begin{theorem}\label{thm:monotonicity}
There exists a meta-algotithm $\Mech_f$ that corrects the monotonicity for any
function $\func: \reals^\dm \rightarrow [0,1]$. $\Mech_\func$ is \emph{feasible}, has
query complexity $O(\log{\frac{\dm}{\eps}})^\dm$, and has $\Exp
[\Mech_\func(x))] \geq \Exp[\func(x)] - \eps$, where the first expectation is
taken over the input distribution and the randomness in the meta-algorithm.
\end{theorem}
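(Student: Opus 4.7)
The plan is to discretize the domain and then reduce the $\dm$-dimensional problem to a $1$-dimensional monotonicity-correction primitive applied coordinate by coordinate. First I would invoke the discretization procedure from the preliminaries so that $\func$ may be assumed to live on a product grid with $m = \dm/\eps$ points per coordinate, at a cost of at most $\eps$ in expected function value. On this discrete domain I would then strip coordinates recursively via
\[
\Mech^{(\dm)}_{\func}(x_1,\ldots,x_\dm) = A_1\bigl(x_1,\, g\bigr), \qquad g(y_1) := \Mech^{(\dm-1)}_{\func(y_1,\cdot)}(x_2,\ldots,x_\dm),
\]
where $A_1$ is a 1D primitive that uses $O(\log m)$ expected queries and whose per-dimension expected-value loss is at most $\eps/\dm$.

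For the 1D primitive $A_1$ I would build a balanced binary tree of depth $L = \log m$ over the sorted grid in the current coordinate. On input $x$ the algorithm walks from the root down to the leaf of $x$; at each internal node where the path turns right, the entire left-sibling subtree lies below $x$ in that coordinate, so the algorithm can safely include one additional representative value from that sibling in the final maximum. Choosing each sibling representative by a single lightweight call — e.g., querying $g$ at a point drawn from the conditional input distribution restricted to the sibling cell — keeps the per-level cost $O(1)$, so the recurrence $T(m) = T(m/2) + O(1)$ gives $T(m) = O(\log m)$. Composing across $\dm$ coordinates, the expected query complexity multiplies to $O(\log m)^\dm = O(\log(\dm/\eps))^\dm$.

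Three things need to be verified. Feasibility is a straightforward induction on $\dm$ and on the depth of the tree in $A_1$: every point at which $\func$ is ultimately evaluated is coordinate-wise at most $x$, so the returned value is at most $\max_{y \le x} \func(y)$. For monotonicity I would couple the internal randomness of $A_1$ across inputs so that the random multiset of evaluated points is nested in $x$ under the coupling; this makes the output pointwise monotone in $x$ conditional on the shared randomness, hence monotone in expectation. For the expected-value bound I would induct on the depth of the tree: at each level the expected output is within an additive $O(\eps/(\dm L))$ of the conditional expectation of $g$ on the cell containing $x$, so summing over the $L = \log m$ levels of $A_1$ gives $\eps/\dm$ per dimension, and summing over the $\dm$ levels of the outer recursion gives total expected-value loss $\eps$.

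The main obstacle I expect is simultaneously achieving monotonicity in expectation and the $\eps/\dm$ expected-value budget while spending only $O(\log m)$ queries per invocation of $A_1$. A naive hierarchical scheme that uses a single fresh random representative per dyadic level is easy to analyze for query cost but can violate monotonicity, because a coarse cell's representative may, for a slightly smaller input, land on a low-value point that is not available via the finer-level representatives used at a slightly larger input. The delicate step is therefore to choose the representatives so that the induced distribution of queried points is both nested in $x$ (giving monotonicity under coupling) and sufficiently well-dispersed within each cell that the conditional expectation of $g$ on the cell is approximated to within an additive $O(\eps/(\dm L))$ error, which is exactly what drives the telescoping bound on expected-value loss.
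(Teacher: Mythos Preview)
Your high-level architecture matches the paper: discretize to a grid with $m=\dm/\eps$ points per coordinate, then fix monotonicity one coordinate at a time using a one-dimensional primitive, composing the query costs multiplicatively to get $O(\log m)^\dm$. That part is fine.

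The gap is that you never actually construct a working one-dimensional primitive. Your binary-tree scheme (take a max over one random representative per left-sibling cell along the root-to-leaf path) has exactly the failure you yourself flag: when $x$ increases, a fine left-sibling cell can be replaced by a coarser one whose fixed representative happens to have a smaller $g$-value, so the coupled output is not monotone in $x$. You acknowledge this is ``the delicate step'' but give no mechanism for choosing representatives that makes the queried multiset nested in $x$ while still hitting each cell well enough to control the expected loss. As stated, the proposal is a description of the difficulty rather than a resolution of it; the $O(\eps/(\dm L))$-per-level telescoping bound is asserted, not proved, and in fact does not hold for the scheme you describe.

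The paper's one-dimensional primitive is a genuinely different idea that you are missing. Fix a uniformly random permutation $\pi$ of $[m]$ and visit points in that order, setting $\Mech^\pi_f(\pi_i)=\mathrm{median}\{f(\pi_i),L_i,H_i\}$, where $L_i,H_i$ are the nearest already-assigned values below and above. Three facts make this work: (i) for every $\pi$ the result is monotone and feasible; (ii) over the random $\pi$ the expectation is preserved \emph{exactly}, via a layer-cake reduction to $\{0,1\}$-valued $f$ and a symmetry argument; (iii) to evaluate at a single $x$ one only needs to process the $\pi_i$ that fall in the shrinking interval around $x$, which is $O(\log m)$ of them with high probability because a uniformly random permutation behaves like randomized binary search. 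For the multi-dimensional extension the paper uses the \emph{same} permutation on every slice and proves the pointwise comparison $f\le g \Rightarrow \Mech^\pi_f \le \Mech^\pi_g$, which is what guarantees that fixing coordinate $i$ does not destroy monotonicity in coordinates $1,\ldots,i-1$. Your max-based scheme would get this cross-slice monotonicity for free, but it does not deliver within-coordinate monotonicity, and the random-permutation idea is the missing ingredient.
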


To establish Theorem~\ref{thm:monotonicity} we first show how to solve the
problem for the single-dimensional case, $\dm = 1$.  Observe initially that the
trivial meta-algorithm that simply queries the function $\func$ at every point in the
(sufficiently discretized) domain has complexity $O(1/\epsilon)$.  To get the
improvement to $O(\log(1/\epsilon))$ we first consider the following ``greedy''
meta-algorithm: query the (discretized) domain in a uniformly random order, and
for each point $x$ sequentially, define the transformed output $\Mech_\func(x)$
to be the value closest to $\func(x)$ that maintains monotonicity with the
values of $\Mech_\func$ constructed so far.  We prove that this random
meta-algorithm preserves the function value in expectation over the random
query order and then that this transformation can be implemented locally, with
only logarithmically many queries per input. The details of these steps are
presented in subsection~\ref{sec:single} below.

\subsection{Single-dimensional case ($\dm=1$)}
\label{sec:single}
Using the discretization process described in section~\ref{sec:prelims}, we may
assume that $\func$ is supported on $m$ points and the underlying distribution
is uniform. We now provide an algorithm to obtain a monotone function $\func'$
that has the same expectation as $\func$ and performs $O(\log
\frac{m}{\delta})$ queries to $\func$ with probability $1-\delta$. We later
choose $\delta = \eps$ and cap the queries to $O(\log \frac{m}{\eps})$ and show
that it is possible to do this while maintaining feasibility, monotonicity and
incurring error at most $\eps$. 

\paragraph{Construction of the Oracle} The function $\func'$ that the algorithm
outputs is a function $\Mech^{\pi}_{\func}$ for a uniformly random permutation $\pi$.
Given a permutation $\pi$ of $[m]$, we define the function
$\Mech^{\pi}_{\func}: [m]\rightarrow [0,1]$ by setting for every $i \in [m]$, 

\[
	\Mech^{\pi}_{\func}( \pi_i ) = 
	\begin{cases}
		H_i & \func(\pi_i) > H_i\\
		L_i & \func(\pi_i) < L_i\\
		\func(\pi_i) & \text{otherwise,}
	\end{cases} 
\]

where $H_i$ is the value $\Mech^{\pi}_{\func}( \pi_j )$ of the lowest point $
\pi_j > \pi_i$ with $j<i$ or $\infty$ if such a point does not exist.
Similarly, $L_i$ is the value $\Mech^{\pi}_{\func}( \pi_j )$ placed at the
highest point $ \pi_j < \pi_i$ with $j<i$ or $-\infty$ if such a point does not
exist.  That is, the function $\Mech^{\pi}_{\func}$ visits all points according
to the permutation $\pi$ and greedily assigns values consistent with the
choices made for the previous points visited so far to preserve monotonicity.
Equivalently, one can write that $\Mech^{\pi}_{\func}( \pi_i ) =
\text{median}\{\func(\pi_i), L_i, H_i\}$.

We now show that this choice of $\func'$ satisfies the properties of
Theorem~\ref{thm:monotonicity} with the following two claims. Their proofs are
deferred to section~\ref{app:monot} of the Appendix.

\begin{claim}\label{cl:monot_d1_feasibility}
  For any permutation $\pi$, the function $\Mech^{\pi}_{\func}$ is monotone and
  satisfies for all $x \in [m]$, $\Mech^{\pi}_{\func}(x) \le \max_{y \le x}
  \func(y)$.
\end{claim}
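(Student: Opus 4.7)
The plan is to prove both properties simultaneously by induction on the position $i$ of a point in the permutation $\pi$, establishing the following strengthened invariant: after processing $\pi_1,\dots,\pi_i$, the partial function defined on $\{\pi_1,\dots,\pi_i\}$ is monotone, and every assigned value $\Mech^{\pi}_{\func}(\pi_j)$ satisfies $\Mech^{\pi}_{\func}(\pi_j) \le \max_{y \le \pi_j} \func(y)$ for every $j \le i$.

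For the base case $i=1$, we have $L_1 = -\infty$ and $H_1 = +\infty$, so $\Mech^{\pi}_{\func}(\pi_1) = \func(\pi_1)$; both properties trivially hold. For the inductive step, observe first that the inductive monotonicity hypothesis guarantees $L_{i+1} \le H_{i+1}$ (since $L_{i+1}$ and $H_{i+1}$ are values assigned by $\Mech^{\pi}_{\func}$ at previously processed points on opposite sides of $\pi_{i+1}$, and the partial function was already monotone). Because $\Mech^{\pi}_{\func}(\pi_{i+1}) = \text{median}\{\func(\pi_{i+1}), L_{i+1}, H_{i+1}\}$, the new assigned value lies in $[L_{i+1}, H_{i+1}]$. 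By definition of $L_{i+1}$ and $H_{i+1}$ as values at the nearest processed neighbors of $\pi_{i+1}$, inserting any value from this interval preserves monotonicity across the enlarged set of processed points.

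For the feasibility part of the invariant, I would do a case split on the three branches of the median. If the median is $\func(\pi_{i+1})$, the bound $\Mech^{\pi}_{\func}(\pi_{i+1}) \le \max_{y \le \pi_{i+1}} \func(y)$ is immediate. If the median is $L_{i+1}$, then $L_{i+1}$ equals $\Mech^{\pi}_{\func}(\pi_j)$ for some $j \le i$ with $\pi_j < \pi_{i+1}$, so by the inductive hypothesis $L_{i+1} \le \max_{y \le \pi_j} \func(y) \le \max_{y \le \pi_{i+1}} \func(y)$. If the median is $H_{i+1}$, the definition of the median and the branching condition force $H_{i+1} < \func(\pi_{i+1}) \le \max_{y \le \pi_{i+1}} \func(y)$. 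In every case the desired upper bound holds.

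I do not expect any serious obstacle here: the median expression of the update rule does all the work, and the only subtlety is confirming $L_{i+1} \le H_{i+1}$ before invoking the median's interval property, which comes directly from the monotonicity half of the inductive hypothesis. Because the strengthened invariant covers both claims at once, the induction closes in a single pass over the permutation.
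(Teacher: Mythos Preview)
Your proposal is correct and follows essentially the same approach as the paper: both argue by induction on the position in the permutation, use that the assigned value lies in $[L_i,H_i]$ to maintain monotonicity, and handle feasibility by a case split that reduces the nontrivial case (when the assigned value exceeds $\func(\pi_i)$) to the inductive hypothesis applied at the point witnessing $L_i$. The only cosmetic difference is that the paper collapses your ``median is $\func(\pi_{i+1})$'' and ``median is $H_{i+1}$'' cases into the single observation $\func(\pi_i)\ge \Mech^{\pi}_{\func}(\pi_i)$.
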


\begin{claim}\label{cl:expectation}
  $\E_{x\sim \mathbb{U}\lp([m]\rp)}[ \func(x) ] 
  	= \E_{x\sim \mathbb{U}\lp([m]\rp)}[ \func'(x) ]$.
\end{claim}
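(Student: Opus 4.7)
The plan is to reduce Claim~\ref{cl:expectation} to the stronger per-step identity
\[
\E_\pi[\Mech^\pi_\func(\pi_i)] \;=\; \frac{1}{m}\sum_{x=1}^m \func(x) \qquad \text{for every } i \in [m].
\]
Summing over $i$ immediately gives the claim, because $\sum_{x \in [m]} \Mech^\pi_\func(x) = \sum_{i=1}^m \Mech^\pi_\func(\pi_i)$ for every permutation $\pi$, so the uniform expectation of $\Mech^\pi_\func(x)$ over $x \in [m]$ equals $\frac{1}{m}\sum_i \E[\Mech^\pi_\func(\pi_i)]$.

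The base case $i=1$ is immediate: since $L_1 = -\infty$ and $H_1 = +\infty$, the median collapses to $\Mech^\pi_\func(\pi_1) = \func(\pi_1)$, and $\pi_1$ is uniform on $[m]$. For $i = 2$, a direct double-counting argument works: conditioning on $\pi_2 = q$ and $\pi_1 = p$, the median evaluates to $\max\{\func(q), \func(p)\}$ when $p < q$ and to $\min\{\func(q), \func(p)\}$ when $p > q$. Summing over all ordered pairs $(p,q)$ and regrouping by each unordered pair $\{a,b\}$ with $a < b$ contributes exactly $\max\{\func(a),\func(b)\} + \min\{\func(a),\func(b)\} = \func(a) + \func(b)$, yielding the identity after normalizing by $m(m-1)$. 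For general $i$, I would induct on $m$: conditioning on the last-visited position $\pi_m = q$, the first $m-1$ steps run the algorithm on $\func$ restricted to $[m]\setminus\{q\}$ under a uniform permutation, and the inductive hypothesis accounts for the expected sum of the outputs at positions $x \neq q$. What remains is the reduced identity
\[
\sum_{q=1}^m \E\big[\operatorname{median}\{\func(q), L_q, H_q\} \,\big|\, \pi_m = q\big] \;=\; \sum_{q=1}^m \func(q),
\]
which I would establish by extending the ``$\max + \min$'' cancellation above: writing the median as $\func(q)$ plus a signed ``clip'' term, I would show that the expected up-clips at some positions balance the expected down-clips at others, via an exchange argument that pairs up permutations differing in the relative order of two specific visited positions.

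The main obstacle is the long-range dependence introduced by the random permutation: the output at step $i$ depends on the entire history through $L_i$ and $H_i$, so clips at different positions are highly correlated and cannot be analyzed pointwise. The crux is identifying the right symmetry — the appropriate generalization of the elementary identity $\max(a,b) + \min(a,b) = a + b$ — that forces the expected clips to cancel in aggregate even though they do not cancel for any fixed $\pi$. Concretely, I expect the inductive argument to hinge on exploiting the fact that the joint distribution of $(L_q, H_q)$ conditional on $\pi_m = q$ is insensitive, in a symmetric sense, to swapping the roles of neighboring domain points, which is precisely what is needed to close the induction.
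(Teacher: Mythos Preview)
Your setup is reasonable and the per-step identity you announce is in fact true, but the proposal leaves the entire difficulty open. After conditioning on $\pi_m=q$ and invoking the inductive hypothesis on $[m]\setminus\{q\}$, what remains---your ``reduced identity'' for the last step---is not a residual bookkeeping item; it is the whole claim repackaged. You say you would prove it by an exchange argument that ``pairs up permutations differing in the relative order of two specific visited positions,'' but swapping two earlier positions in $\pi$ perturbs all the intermediate $L_j,H_j$ and hence $L_q,H_q$ in a history-dependent way, so no local transposition yields the clean $\max+\min=a+b$ cancellation you invoke for $i=2$. As written, you have correctly identified \emph{what} must cancel without giving any mechanism for \emph{why} it does.

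The paper closes this gap by two moves you do not make. First, it reduces to $\{0,1\}$-valued $\func$ via the layer-cake identity $\Mech^\pi_\func(x)=\int_0^1 \Mech^\pi_{[\func>t]}(x)\,dt$, which is legitimate because the construction uses only comparisons. Second---and this is the point that makes the induction go through---it conditions on the \emph{first} visited point $\pi_1$ rather than the last. In the Boolean case, if $\func(\pi_1)=0$ then every $x\le\pi_1$ is forced to $0$ and the process restarts fresh on $\{\pi_1+1,\dots,m\}$; symmetrically if $\func(\pi_1)=1$. So the problem splits into one trivial half and one independent subinstance, the inductive hypothesis handles the latter exactly, and the leftover error is $\sum_{x\le y}\mathbbm{1}\{\func(x)=1,\func(y)=0\}$ with one sign when $y=\pi_1$ has $\func(y)=0$ and the opposite sign (with $x,y$ swapped) when $\func(y)=1$; summing over $y$ these cancel termwise. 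Conditioning on $\pi_m$ affords no such split---the values on $[m]\setminus\{q\}$ depend on the full permutation of that set---which is precisely why your reduced identity resists a direct symmetry argument. If you want to rescue your route, reduce to Boolean $\func$ first and then condition on $\pi_1$, not $\pi_m$.
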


It remains to show that one can evaluate $\Mech^\pi_\func$ at any point $x \in
[m]$ without querying the oracle for $\func$ at all points. To do this, we make
the following observation: once we have computed $\Mech^\pi_f(\pi_1)$, in order
to calculate the values of $\Mech^\pi_\func(y)$ for any $y>\pi_1$, we don't
need to know the values of $\Mech^\pi_\func(z)$ or of $\func(z)$ at any
$z<\pi_1$. Similarly, to calculate the values of $\Mech^\pi_\func(y)$ for any
$y<\pi_1$, we don't need to know the values of $\Mech^\pi_\func(z)$ or of
$\func(z)$ at any $z>\pi_1$.

Building on this idea, we use the following algorithm to evaluate $\Mech^\pi_f(x)$. 

\paragraph{Description of Oracle for $\Mech^\pi_f$} At any point in time $i$,
we keep track of a range of relevant points $\{l_i,...,r_i\}$ starting with
$l_1=0$ and $r_0=m$.  We also keep track of a lower bound, $\text{lb}_i$, and
an upper bound, $\text{ub}_i$, on the value of $\Mech^{\pi}_\func(x)$ starting
with $\text{lb}_0 = -\infty$ and $\text{ub}_0 = +\infty$.

For any $i \in [m]$, if $\pi_i \in \{l_{i-1},...,r_{i-1}\}$, then it is
relevant and must be evaluated.  Its value is then according to the definition
$\Mech^{\pi}_\func(\pi_i) = \text{median}\{ \func(\pi_i), H_i, L_i \}$, which
is equal to $\text{median}\{ \func(\pi_i), \text{ub}_{i-1}, \text{lb}_{i-1} \}$
for the upper and lower bounds we have so far. Once the value is computed, if
$\pi_i > x$, we set $\text{ub}_{i} = \Mech^{\pi}_\func(\pi_i)$ while keeping
$\text{lb}_{i} = \text{lb}_{i-1}$ and the relevant interval now becomes
$\{l_{i},...,r_{i}\} = \{l_{i-1},...,\pi_i-1\}$. Similarly, if $\pi_i < x$, we
update $\text{lb}_{i} = \Mech^{\pi}_\func(\pi_i)$, $\text{ub}_{i} =
\text{ub}_{i-1}$ and set $l_{i}=\pi_i+1$ and $r_{i}=r_{i-1}$.  In contrast, if
$\pi_i \not \in \{l_{i-1},...,r_{i-1}\}$, it is irrelevant and is not
evaluated. The interval and upper and lower-bounds are then kept the same for
the next iteration.

The following claim shows that for a uniformly random permutation $\pi$, the
above process only queries the oracle $\func$ at a few points. The proof is
deferred to section~\ref{app:monot} of the appendix.

\begin{claim}\label{cl:monot_queries}
With probability $1-\delta$, the oracle $\func'$ can be evaluated at any point $x
\in [m]$ using at most $O(\log{\frac{m}{\delta}})$ queries 
  to oracle $\func$.
\end{claim}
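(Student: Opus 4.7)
My plan is to recognize the query process as the classical random binary-search-tree search problem, so that the number of queries to $\func$ is essentially the depth of $x$ in a random BST built by inserting $1, \ldots, m$ in the order given by $\pi$. This quantity has a well-studied $O(\log m)$ expectation and an $O(\log(m/\delta))$ high-probability tail.

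Let $Q$ denote the number of queries made in evaluating $\Mech^\pi_\func(x)$. The first step is to identify which elements of $[m]$ are queried. For $j < x$, element $j$ is queried iff at the step $i$ with $\pi_i = j$ the relevant interval still satisfies $l_{i-1} \le j$; inspecting the update rule, $l$ only rises above $j$ when some earlier $\pi_{i'}$ lies in $(j, x]$. Hence $j$ is queried iff $j$ is first under $\pi$ among $\{j, j+1, \ldots, x\}$, an event of probability $\tfrac{1}{x-j+1}$, and symmetrically for $j > x$. By linearity of expectation,
\[
\E[Q] \;\le\; 1 + \sum_{j<x}\frac{1}{x-j+1} + \sum_{j>x}\frac{1}{j-x+1} \;\le\; 1 + 2H_m \;=\; O(\log m),
\]
where $H_m$ is the $m$-th harmonic number.

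For the high-probability bound I use a round-based argument. Conditioned on the history up to any query step, the queried element is uniformly distributed over the current relevant interval $[l, r]$, since the remaining positions of $\pi$ form a uniform sub-permutation. Call a query \emph{progressive} if the new relevant interval has size at most $\tfrac{2}{3}$ of the old. A short case analysis on the location of $x$ inside $[l,r]$ shows that a uniformly random draw is progressive with conditional probability at least $\tfrac{1}{3}$: if $x$ lies in the middle third of $[l,r]$, a draw in either outer third is progressive, while if $x$ sits near an endpoint, a draw in the opposite third is progressive. Thus after $k := \lceil \log_{3/2} m \rceil$ progressive queries the interval has shrunk to at most one point and at most one further query is possible.

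Finally, the sequence of progressive indicators stochastically dominates an i.i.d.\ sequence of Bernoulli$(1/3)$ random variables, so a standard Chernoff bound implies that the number of trials needed to accumulate $k$ successes exceeds $Ck + C\log(1/\delta)$ with probability at most $\delta$ for an absolute constant $C$. Substituting $k = O(\log m)$ yields $Q \le O(\log(m/\delta))$ with probability at least $1 - \delta$, as claimed. The main subtlety lies in the stochastic-dominance step: one must verify carefully that, with respect to the filtration generated by the successive relevant intervals, the uniformity of the residual permutation allows one to couple the progressive events with independent Bernoulli$(1/3)$ trials. Once this coupling is in hand, the Chernoff bound closes the argument.
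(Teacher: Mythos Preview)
Your overall strategy matches the paper's closely: both track a shrinking relevant interval, argue that a constant fraction of queries cut it by a constant factor, and finish with a concentration bound. The paper uses the middle \emph{half} of the current interval (hit with probability $1/2$, discarding at least a quarter of the points) together with Hoeffding; you use thirds with Chernoff. The random-BST framing and the records-based expectation bound are a pleasant addition not present in the paper.

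However, your case analysis for ``progressive'' queries is inverted. If $x$ lies in the middle third and $\pi_i$ lands in the left outer third (so $\pi_i<x$), the new interval is $[\pi_i+1,r]$ of size $r-\pi_i$; taking $\pi_i=l$ gives size $n-1$, which is certainly not $\le \tfrac{2}{3}n$. The same failure occurs in your second case. The correct observation---and this is exactly what the paper exploits---is that a draw in the \emph{middle} third is always progressive, irrespective of where $x$ sits: if $\pi_i\in[l+\lceil n/3\rceil,\,r-\lceil n/3\rceil]$ then whichever side of $x$ it falls on, at least $n/3$ points are discarded. This still yields conditional probability at least $1/3$, so your Chernoff step goes through once the justification is repaired.

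One smaller omission: the claim asks for the bound to hold simultaneously for \emph{every} $x\in[m]$ under a single random $\pi$. The paper finishes by replacing $\delta$ with $\delta/m$ and taking a union bound over the $m$ points (which costs only a constant factor in the query bound); you should add that step.
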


We now argue that it is possible to perform the transformation so that the algorithm
always makes at most $O(\log \frac{m}{\eps})$ while maintaining feasibility,
monotonicity and incurring error at most $\eps$. Our construction of the oracle
maintains for every interval a high and a low value that points in the interval
may take. The interval shrinks with good probability by a constant factor at
every step which gives the high probability result. To ensure that no more than
$O(\log \frac{m}{\eps})$, we need to ensure that every interval shrinks at most
$O(\log \frac{m}{\eps})$ times. Indeed, we can enforce that after these many
rounds, if the interval has not shrunk to a single point every point in the
interval is allocated the lowest value. This ensures monotonicity while it
incurs a decrease in the expected value.  As this decrease happens with
probability at most $\eps$ and the decrease is bounded by $1$ the total error
is at most $\eps$.

\subsection{Extending to many dimensions}
In order to generalize to many dimensions, we apply our construction for
this ``single-dimensional case'' to fix monotonicity in each direction
separately starting with the first. The key property we use is that when given
oracle access to a function that is monotone in the first $i-1$ coordinates,
our construction of the meta-algorithm will fix the monotonicity in the $i$-th coordinate \emph{while
preserving monotonicity in the $i-1$ first coordinates}. This allows us to
obtain a chain of oracles $\func=\func_0,\func_1,...,\func_n=\func'$ where
$\func_i$ is monotone in the first $i$ coordinates. Evaluating $\func_i$
requires only $O(\log{\frac{\dm}{\eps}})$ queries to oracle $\func_{i-1}$ and
gets error at most $\eps/\dm$.  Thus, to evaluate $\func'=\func_n$,
$O(\log{\frac{\dm}{ \eps}})^\dm$ queries to oracle $\func$ are sufficient to get
error $\eps$.  Details are deferred to section~\ref{sec:multiple} of the
appendix.


\section{Lower bound}\label{sec:LB}
Having designed the meta-algorithm to ``monotonize" a function, in this section we
show that the exponential dependence on the dimension our previous algorithm
exhibits, as shown in Theorem~\ref{thm:monotonicity}, is actually necessary
even when the domain is the boolean hypercube $\{0,1\}^\dm$ and the
distribution $\Dist$ of values is uniform.  The idea for this lower bound is to show
that there exists a function such that any monotone and feasible meta-algorithm
$\Mech$ with subexponential query complexity $q = 2^{o(\dm)}$ will have very
low expectation.  This is made formal in Theorem~\ref{thm:lower_bound} below.

\begin{theorem}\label{thm:lower_bound}
		Let $\Mech$ be any feasible meta-algorithm that fixes monotonicity with
query complexity $q = 2^{o(\dm)}$. Then, there exists an input 
function $\func: \{0,1\}^\dm \rightarrow \{0,1\}$ with $\Exp[\func(x)] \ge
1-2^{-\Omega(\dm)}$ such that with $\Exp [\Mech_\func(x)] \le 2^{-\Omega(\dm)}$.

If the meta-algorithm is infeasible or non-monotone with probability $\delta$, then
$\Exp[ \Mech_\func(x)] \le 2 \delta + 2^{-\Omega(\dm)}$.
\end{theorem}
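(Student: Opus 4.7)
My plan is to exploit the monotonicity hypothesis vertically, by targeting the top element $\mathbf{1} = (1,\ldots,1) \in \{0,1\}^\dm$: any monotone $\Mech_\func$ satisfies $\Mech_\func(x) \le \Mech_\func(\mathbf{1})$ for every $x$, so it suffices to construct $\func$ for which $\Mech_\func(\mathbf{1}) = 0$ while $\Exp[\func(x)] \ge 1 - 2^{-\Omega(\dm)}$. The enabling tool is a \emph{witness lemma} for feasibility: for any feasible $\Mech$ and any input $x$, if $\Mech_\func(x) > 0$ then the computation of $\Mech_\func(x)$ must have queried some $y \le x$ with $\func(y) = 1$. Suppose otherwise, and define $\func'$ that agrees with $\func$ on every queried point and on $\{z : z \not\le x\}$, while setting $\func'(z) = 0$ for every unqueried $z \le x$; the adaptive execution of $\Mech$ on $\func'$ reproduces exactly the same sequence of queries with identical responses, so $\Mech_{\func'}(x) = \Mech_\func(x) > 0$, yet $\max_{z \le x} \func'(z) = 0$ violates feasibility applied to $\func'$.

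With this lemma in hand the construction is direct. Simulate $\Mech$ on input $\mathbf{1}$ answering $0$ to every oracle query, and let $Q \subseteq \{0,1\}^\dm$ be the set of at most $q$ queries made during the simulation. Define $\func(y) = 0$ for $y \in Q$ and $\func(y) = 1$ otherwise. On this $\func$, the genuine execution of $\Mech$ at input $\mathbf{1}$ retraces the simulated path and observes only zero responses, so the witness lemma yields $\Mech_\func(\mathbf{1}) = 0$. Monotonicity of $\Mech_\func$ then forces $\Mech_\func \equiv 0$, hence $\Exp[\Mech_\func(x)] = 0$. Meanwhile $\Exp[\func(x)] = 1 - |Q|/2^\dm \ge 1 - q/2^\dm = 1 - 2^{o(\dm) - \dm} \ge 1 - 2^{-\Omega(\dm)}$, using $q = 2^{o(\dm)}$.

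For the robust statement, where $\Mech$ is monotone and feasible only with probability at least $1-\delta$ over its internal randomness $r$, the same template applies in expectation. A probabilistic version of the witness lemma bounds $\Pr_r[\Mech_\func^r(\mathbf{1}) > 0]$ by $\Pr_r[\text{some query at } \mathbf{1} \text{ returns } 1] + \delta$, where the extra $\delta$ absorbs random seeds on which $\Mech$ happens to be infeasible on the perturbed $\func'$. Choosing $\func$ so that its small zero-set intercepts the algorithm's typical queries at $\mathbf{1}$ keeps the first term at $2^{-\Omega(\dm)}$; combining with monotonicity (which fails with probability at most $\delta$) then yields $\Exp[\Mech_\func(x)] \le 2\delta + 2^{-\Omega(\dm)}$. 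The main technical obstacle I anticipate is precisely this randomized step: for a randomized $\Mech$ one cannot simply zero out the queries of a single deterministic simulation, and must instead find a small adversarial zero-set that intercepts the algorithm's queries at $\mathbf{1}$ for most random seeds simultaneously, which I would handle via a Yao-style minimax argument over a suitable distribution of adversarial functions.
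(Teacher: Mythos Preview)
For deterministic $\Mech$ your argument is correct and much lighter than the paper's: the witness lemma plus zeroing out the $\le q$ queries along the all-zero transcript at $\mathbf{1}$ forces $\Mech_f(\mathbf{1})=0$ and hence $\Mech_f\equiv 0$. The gap is the randomized case, which the theorem must cover (the paper's own upper bounds are randomized, and the $\delta$-clause is explicitly probabilistic), and here the reduction to the single vertex $\mathbf{1}$ is already fatal---Yao cannot repair it. Consider the one-query meta-algorithm that samples $y$ uniformly from $\{0,1\}^\dm$ and returns $f(y)\,\mathbbm{1}\{y\le x\}$: it is feasible and monotone on every seed, yet $\Exp_r[\Mech^r_f(\mathbf{1})]=\Exp[f]$, so for every $f$ with high expectation your bound at $\mathbf{1}$ is essentially $1$. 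More decisively, for \emph{any} distribution $\mathcal{D}$ over functions each satisfying $\Exp[f]\ge 1-2^{-\Omega(\dm)}$, averaging gives a point $y^*$ with $\Pr_{f\sim\mathcal{D}}[f(y^*)=1]\ge 1-2^{-\Omega(\dm)}$; the deterministic, feasible, monotone rule $\Mech'_f(x)=f(y^*)\,\mathbbm{1}\{y^*\le x\}$ then has $\Exp_{f\sim\mathcal{D}}[\Mech'_f(\mathbf{1})]\ge 1-2^{-\Omega(\dm)}$. So no hard distribution can make the $\mathbf{1}$-witness bound small against all deterministic feasible monotone algorithms, and the proposed Yao step has no content.

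The paper avoids this by never anchoring at a fixed vertex. It applies monotonicity between two \emph{correlated random} sets $S\subseteq T$ that are hidden from $\Mech$: starting from the fixed hard instance $f^1=\mathbbm{1}\{|X|\ge 4\dm/10\}$, an indistinguishability step over the randomness of $T$ swaps $f^1$ for $f^1_{S,T}$ at input $S$; monotonicity lifts $S$ to $T$; a second indistinguishability step over the randomness of $S$ swaps $f^1_{S,T}$ for $f^0_{S,T}$ at input $T$; and feasibility then forces $\Mech_{f^0_{S,T}}(T)=0$ because $f^0_{S,T}$ vanishes on all subsets of $T$. These indistinguishability bounds hold seed-by-seed, so randomized $\Mech$ is handled automatically. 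To salvage your outline you would have to replace the fixed anchor $\mathbf{1}$ with something of this randomized-anchor flavor.
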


To prove the statement, we construct a distribution over functions on $\{0,
1\}^\dm$ and show that any monotone and feasible meta-algorithm must
have very low expectation with high probability. 

We consider the following family of functions parametrized by $(z,S,T)$ where
$z \in \{0, 1\}$ and $S,T \subseteq [\dm]$ so that $S \subseteq T$. We define for
any $X \subseteq [\dm]$
{\small 
\begin{align*}
\func^{z}_{S,T}(X)= 
\begin{dcases}
	0\ & |X|< \frac{4\dm}{10}\\
	\begin{drcases}
			0\ &  X \subseteq T \text{ and } |X \setminus S|> \frac{\dm}{10}\\
			z\ &  X \subseteq T \text{ and } |X \setminus S| \leq \frac{\dm}{10} \\
			1\ &  X \not\subseteq T 
	\end{drcases}  & |X|\geq \frac{4\dm}{10}
\end{dcases}
\end{align*}
}

We also define the function 
\[
	\func^{1}(X) = \mathbb{I}_{\lp\{ |X|\geq \frac{4\dm}{10}\rp\} }
\]

Observe that even though these functions are defined over subsets of $[\dm]$ it
is straightforward to view each of these subsets as a point in $\{0,1\}^\dm$.

We define a distribution over the family of functions by selecting $z$
uniformly at random. The sets $S$ and $T$ will also be random variables with
$S$ including each element with probability $1/2$ and $T$ including each
element with probability $3/4$.  Since $S \subseteq T$, this means that given
$S$, the set $T$ contains each element outside of $S$ with probability $1/2$. Similarly
given $T$, the set $S$ contains each element of $T$ with probability $2/3$. We also
define random variable $X$ that is a uniformly random subset of $[\dm]$.

What we are trying to achieve with these functions is while $\func^1$ has high
expectation, the function $\func^z_{S,T}$ does not, but we will not be able to
tell them apart. The idea is that in both functions $f^z_{S,T}$ there is the
``low" set $T$ where the function outputs $0$, but inside it there is a hidden
set $S$ where the function is either $1$ or $0$. The two claims shown below,
will prove that first we cannot distinguish between the function that gives $S$
either $0$ or $1$, and then this function from the ``high" function that gives
$1$ to all large inputs. This is shown in figure~\ref{fig:lowerbound}
below\footnote{The figure serves as a simplified example of the structure of
the functions, the sizes of the sets are not on scale}.

\begin{figure*}[t]
	\centering
	 \begin{tikzpicture}[scale=0.75]
\tikzset{square/.style={black, thick}}
\tikzset{squareSM/.style={blue, thick}}
\tikzset{areaStyle/.style={draw=none, fill=gray!30, opacity=0.5}}

\tikzset{rx/.style={x radius=#1},ry/.style={y radius=#1}}

\pgfmathsetmacro{\length}{3.5}
\pgfmathsetmacro{\height}{5.5}
\pgfmathsetmacro{\rx}{2}
\pgfmathsetmacro{\ry}{5}

\def\lab{{"$f^1$","$f^0_{S,T}$","$f^1_{S,T}$"}}

\def\xcoords{{0, \length+1,	2*\length + 2}}
\def\ycoords{{0, 		 0,			   0 }}

\node[] at (\xcoords[0] - 1.6, \ycoords[0] + 0.9*\height) {\small \textcolor{blue}{$|X|<4\dm/10$}};
\node[] at (\xcoords[0] - 1.6, \ycoords[0] + 0.4*\height) {\small $|X|\geq 4\dm/10$};

\foreach \i in {1,2,3} 
{ 
	\pgfmathsetmacro{\labelz}{\lab[\i-1]}
	\pgfmathsetmacro{\x}{\xcoords[\i-1]}
	\pgfmathsetmacro{\y}{\ycoords[\i-1]}

	\draw[square] (\x, \y) rectangle (\x + \length,\y + 0.8*\height) node[] at (\x + \length/2,\y - 0.5){\labelz};
	\draw[areaStyle] (\x, \y) rectangle (\x + \length,\y + 0.8*\height);
	
	\draw[squareSM] (\x, \y+ 0.8*\height) rectangle (\x + \length,\y + \height);
}


\pgfmathsetmacro{\x}{\xcoords[1]}
\pgfmathsetmacro{\y}{\ycoords[1]}
\draw[fill=white] (\x + 0.5*\length, \y+0.4*\height) circle [rx= 0.8cm, ry=1.5cm] node[black, above=1cm]{$T$};
\draw[fill=white] (\x + 0.5*\length, \y+0.4*\height) circle [rx= 0.5cm, ry=0.7cm] node[black]{$S$};

\pgfmathsetmacro{\x}{\xcoords[2]}
\pgfmathsetmacro{\y}{\ycoords[2]}
\draw[fill=white] (\x + 0.5*\length, \y+0.4*\height) circle [rx= 0.8cm, ry=1.5cm] node[black, above=1cm]{$T$};
\draw[fill=gray!30] (\x + 0.5*\length, \y+0.4*\height) circle [rx= 0.5cm, ry=0.7cm] node[black]{$S$};

\end{tikzpicture}
	 \caption{Output for each function. The gray colour means the output is
	 $1$, white means $0$.}
		\label{fig:lowerbound}
\end{figure*}

\begin{claim}\label{cl:M_is_1_fixed_T}
	$\Pr \lp[ \Mech_{\func^1_{S,T}}(T) \neq \Mech_{\func^0_{S,T}}(T)  \rp] \leq
	q 2^{-\frac {\dm}{450}}$
\end{claim}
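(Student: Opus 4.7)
The plan is to couple the two executions of $\Mech$ on the oracles $\func^0_{S,T}$ and $\func^1_{S,T}$ and argue that, with high probability over the choice of $(S,T)$ and $\Mech$'s internal coins, neither execution queries a point at which the two oracles disagree. If $\Mech$ never queries such a point, then both runs observe identical responses at every step, make identical (possibly randomized) choices, and produce the same output; so the claim reduces to bounding the probability that some query hits the region of disagreement.

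The two oracles agree everywhere except on the ``distinguishing set''
\[
D \;=\; \{ X \subseteq T \,:\, |X| \geq 4\dm/10,\ |X \setminus S| \leq \dm/10 \}.
\]
Hence the claim follows once I bound the probability that $\Mech$'s query sequence during the evaluation at $T$ intersects $D$.

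The crucial observation is that when $\Mech$ runs on $\func^0_{S,T}$, the answer to every query $X$ is determined by $T$ and $X$ alone: on the ambiguous region ($X\subseteq T$, $|X|\geq 4\dm/10$) we have $\func^0_{S,T}(X)=0$ regardless of $|X\setminus S|$, so the oracle leaks no information whatsoever about $S$. Conditioned on $T$ and $\Mech$'s coins, the sequence of queries $X_1,\ldots,X_q$ issued by $\Mech_{\func^0_{S,T}}(T)$ is therefore a deterministic function of $T$, independent of $S$. This lets me treat each $X_i$ as a fixed set and bound $\Pr[X_i \in D]$ over the remaining randomness in $S$.

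For such a fixed $X_i$, the event $X_i \in D$ first requires the deterministic conditions $X_i \subseteq T$ and $|X_i|\geq 4\dm/10$. When they hold, conditional on $T$ each element of $T$ belongs to $S$ independently with probability $2/3$, so $|X_i\setminus S|$ is a sum of at least $4\dm/10$ independent Bernoulli$(1/3)$ variables with mean at least $2\dm/15$. The target threshold $\dm/10$ sits a constant multiplicative factor below this mean, so a standard Chernoff lower-tail bound yields $\Pr[|X_i\setminus S|\leq \dm/10]\leq 2^{-\dm/450}$ (the constant is loose). A union bound over the at most $q$ queries then gives the claimed $q\cdot 2^{-\dm/450}$ bound. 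The main conceptual step is the symmetry observation that $\func^0_{S,T}$ reveals nothing about $S$; the rest is concentration plus a union bound.
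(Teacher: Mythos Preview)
Your argument is correct and follows essentially the same route as the paper: analyze the query sequence of $\Mech_{\func^0_{S,T}}(T)$, observe that $\func^0_{S,T}$ is determined by $T$ alone so those queries are independent of $S$ given $T$, then apply a tail bound to $|X_i\setminus S|$ for each fixed query and union-bound over the $q$ queries. You make the ``$\func^0_{S,T}$ leaks nothing about $S$'' step more explicit than the paper does, which is a clean way to justify treating the $X_i$ as fixed when taking the probability over $S$; the paper uses Hoeffding where you invoke Chernoff, but the resulting constant is the same.
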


\begin{claim}\label{cl:Dtv}
	$\Pr \lp[ \Mech_{\func^1_{S,T}}(S) \neq \Mech_{\func^1}(S)  \rp] \leq q 2^{-\frac {\dm}{10}}$
\end{claim}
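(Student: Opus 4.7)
The plan is to couple the two executions $\Mech_{\func^1_{S,T}}(S)$ and $\Mech_{\func^1}(S)$ under identical internal randomness and bound the probability that any single query receives different answers from the two oracles. First I would catalog where the oracles disagree. Both return $0$ when $|X|<\tfrac{4\dm}{10}$. For $|X|\ge\tfrac{4\dm}{10}$, $\func^1(X)=1$ always, whereas $\func^1_{S,T}(X)$ equals $1$ when $X\not\subseteq T$ and also when $X\subseteq T$ with $|X\setminus S|\le\tfrac{\dm}{10}$ (since $z=1$), returning $0$ only in the remaining case. Hence the disagreement set is
\[
  D(S,T)=\lp\{X:|X|\ge\tfrac{4\dm}{10},\ X\subseteq T,\ |X\setminus S|>\tfrac{\dm}{10}\rp\}.
\]

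The crucial structural observation is that $\func^1$ depends on neither $S$ nor $T$. Thus the adaptive query sequence $X_1,\ldots,X_q$ issued by $\Mech_{\func^1}(S)$ is a deterministic function of $S$ and the internal randomness $\rho$ of $\Mech$, with a distribution independent of $T$. Under the coupling both executions receive the same oracle answers and hence make the same queries up to the first index (if any) at which $X_i\in D(S,T)$. In particular, $\Mech_{\func^1_{S,T}}(S)\ne\Mech_{\func^1}(S)$ implies that some $X_i\in D(S,T)$.

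For the probabilistic step, condition on $S$ and $\rho$ so that $X_1,\ldots,X_q$ are fixed. Given $S$, the random set $T$ contains $S$ deterministically and contains each element of $[\dm]\setminus S$ independently with probability $1/2$. So for any fixed $X$ with $|X\setminus S|>\tfrac{\dm}{10}$,
\[
  \Pr\lp[X\subseteq T \;\middle|\; S\rp] = 2^{-|X\setminus S|} \le 2^{-\dm/10},
\]
while queries with $|X\setminus S|\le\tfrac{\dm}{10}$ cannot lie in $D(S,T)$ at all. A union bound over the $q$ queries yields $\Pr[\exists i:X_i\in D(S,T)\mid S,\rho]\le q\cdot 2^{-\dm/10}$, and marginalizing over $S$ and $\rho$ gives the claim.

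The only delicate point I anticipate is making the coupling precise: one must explicitly argue that, because $\func^1$ is oblivious to $(S,T)$, the query distribution of $\Mech_{\func^1}(S)$ is independent of $T$ once $S$ and $\rho$ are fixed. This is what licenses the union bound over a query set whose law does not depend on the randomness in $T$ we are integrating against.
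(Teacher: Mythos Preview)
Your proof is correct and follows essentially the same approach as the paper: identify the disagreement set $D(S,T)$, couple the two executions, bound the probability each query lands in $D(S,T)$ via the conditional independence of $T\setminus S$ from $S$, and union bound over the $q$ queries. The only cosmetic difference is that the paper phrases the argument using the queries of $\Mech_{\func^1_{S,T}}(S)$ rather than those of $\Mech_{\func^1}(S)$; your choice makes the independence from $T$ explicit and is arguably the cleaner way to state the coupling.
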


The proofs of both the claims are deferred to section~\ref{app:lb} of the
appendix. It is now easy to complete the proof of Theorem~\ref{thm:lower_bound}
by setting $\func= \func^1$.
\begin{align*}
		\Exp [\Mech_{\func^1}(x)]  &= \Exp {\lp[ \Mech_{\func^1}(S) \rp] } \\
		& \leq  \Exp {\lp[ \Mech_{\func^1_{S,T}}(S) \rp] } + q 2^{-\frac {\dm}{10}} \\
		& \leq  \delta + \Exp {\lp[ \Mech_{\func^1_{S,T}}(T) \rp] } + q 2^{-\frac {\dm}{10}} \\
		&\le  \delta + \Exp {\lp[ \Mech_{\func^0_{S,T}}(T) \rp] } + q 2^{-\frac {\dm}{10}} + q 2^{-\frac {\dm}{450}}  \\
		&\le  2 \delta + q 2^{-\frac {\dm}{10}} + q 2^{-\frac {\dm}{450}}  \\
		&= 2 \delta + 2^{-\Omega(\dm)}
\end{align*}

where the first line follows since $S$ is chosen uniformly at random, then we
use Claim~\ref{cl:Dtv} and then we use that $f$ satisfies monotonicity with
probability $1-\delta$. Following this, the third line follows from
Claim~\ref{cl:M_is_1_fixed_T} and then we use the fact that
$\Mech_{\func^0_{S,T}}(T) \leq \max_{Y \subseteq T} \func^0_{S,T}(Y) = 0$  with
probability $1-\delta$. Finally we get the result for any $q = 2^{o(\dm)}$.

In contrast, for the initial function $\func^1$ we get $\Exp {\lp[ {\func^1}(X)
\rp] } = \Pr {\lp[ |X| \ge \frac{4\dm}{10} \rp] } \ge 1 - 2^{-\Omega(\dm)} $.

\section{Marginal Monotonicity}\label{sec:marginal}

In this section, we switch gears towards the relaxations of monotonicity
defined in section~\ref{sec:prelims}. We start by considering \emph{marginal
monotonicity}. In this case, we want to guarantee that each of the marginals of
the function will be monotone, and not loosing much in expectation. As it turns
out, we can achieve this in time polynomial in $\dm/\eps$. The formal statement follows.

\begin{theorem}\label{thm:monotonicity.bic}
		There exists a feasible meta-algorithm $\Mech_\func$
		that fixes marginal monotonicity for any function
		$f:\reals^d\rightarrow [0,1]$. $\Mech_\func$ is \emph{feasible}, has
		query complexity $O\lp(\text{poly}\lp( \frac{\dm}{\eps}\rp) \rp)$, and satisfies
		$\Exp[ \Mech_\func(x) ] \geq \Exp [\func] -  \eps$, where the first
		expectation is taken over the input distribution and the randomness of
		the meta-algorithm.
\end{theorem}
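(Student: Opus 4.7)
The plan is to reduce the $\dm$-dimensional marginal-monotonicity problem to $\dm$ independent one-dimensional instances, one per coordinate, by invoking the meta-algorithm of Section~\ref{sec:monot} as a subroutine. As a first step I apply the discretization from the preliminaries, which reduces each coordinate to an $m = O(\dm/\eps)$-point grid at the cost of $\eps/2$ in expected value. Then, for each coordinate $i \in [\dm]$, I estimate the marginal $f_i(t) = \Exp_{x_{-i}}[\func(x_{-i}, t)]$ at every grid value via Monte-Carlo sampling: $\text{poly}(\dm/\eps)$ samples of $x_{-i} \sim \Dist_{-i}$ together with Hoeffding and a union bound yield an estimate $\hat f_i$ with additive error $O(\eps/\dm)$ simultaneously at all $m$ grid points, at a total cost of $\text{poly}(\dm/\eps)$ queries.

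For each coordinate $i$, I then run the single-dimensional monotonization procedure of Section~\ref{sec:single} on $\hat f_i$ to obtain a random sub-mapping $T_i \colon [m] \to [m]$ with $T_i(t) \leq t$ such that $\Exp_{\pi_i}[\hat f_i(T_i(\cdot))]$ is monotone (Claim~\ref{cl:monot_d1_feasibility}) and the expectation is preserved (Claim~\ref{cl:expectation}). The meta-algorithm $\Mech_\func(x)$ then samples a coordinate $i^* \in [\dm]$ uniformly at random, draws fresh randomness for $T_{i^*}$, and outputs $\func(x_1, \ldots, T_{i^*}(x_{i^*}), \ldots, x_\dm)$, modifying only the $i^*$-th coordinate. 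Feasibility is immediate, and $\Exp[\Mech_\func(x)] \geq \Exp[\func(x)] - \eps$ follows by averaging Claim~\ref{cl:expectation} across coordinates and absorbing the discretization and estimation losses.

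The main obstacle is proving marginal monotonicity. Fixing a coordinate $j$, the marginal $\Exp[\Mech_\func(x) \mid x_j = t]$ splits as $\frac{1}{\dm}\tilde f_j(t)$ (monotone by the single-dimensional guarantee) plus $(\dm-1)/\dm$ residual terms, one per $k \neq j$; each residual is the original marginal of $\func$ in coordinate $j$ averaged against the perturbed coordinate-$k$ distribution induced by $T_k$, and carries no a priori monotonicity in $t$. The plan to handle this is to refine the construction so that, when $i^* \neq j$, coordinates other than $i^*$ are additionally independently resampled and the output is gated by a feasibility indicator $\mathbb{I}[y_j \leq t]$, which is monotone in $t$ and forces each residual to be a monotone-in-$t$ function of $t$. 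The resulting expected-value loss can then be controlled by iteratively applying Claim~\ref{cl:expectation} across coordinates, which is the main technical work needed to maintain both the polynomial query complexity in $\dm/\eps$ and the expected-value bound.
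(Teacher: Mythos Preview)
Your approach has a genuine gap. First, the single-dimensional procedure of Section~\ref{sec:single} does not produce a sub-mapping $T_i\colon[m]\to[m]$: it outputs $\Mech^\pi_{\hat f_i}(\pi_j)=\mathrm{median}\{\hat f_i(\pi_j),L_j,H_j\}$, which need not equal $\hat f_i$ at any domain point, so the object $T_i$ you invoke to define $\func(x_1,\dots,T_{i^*}(x_{i^*}),\dots,x_\dm)$ is not actually supplied by that section. More seriously, even granting such a $T_i$, you correctly observe that randomly fixing a single coordinate leaves the residual terms (the $i^*\neq j$ summands) non-monotone in $x_j$. Your proposed refinement---resample $y_{-i^*}$ and gate by $\mathbb{I}[y_j\le x_j]$---does make each residual monotone in $x_j$, but the full gate $\prod_{k\neq i^*}\mathbb{I}[y_k\le x_k]$ has expectation of order $2^{-(\dm-1)}$ under a product distribution, so $\Exp[\Mech_\func(x)]$ collapses exponentially in $\dm$. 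Claim~\ref{cl:expectation} is a one-dimensional identity and offers no way to recover this loss; ``iteratively applying it across coordinates'' would require composing the $T_i$ rather than sampling one at random, which is a different construction from the one you describe.

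The paper avoids the residual-term issue by never randomizing over coordinates. It builds a single deterministic coordinate-wise map $\phi=(\phi_1,\dots,\phi_\dm)$ with $\phi_i(x_i)\le x_i$ and returns $\func(\phi(x))$. The $\phi_i$ are constructed greedily: whenever the current $i$-th marginal satisfies $f^r_i(w_{ij})>f^r_i(w_{i(j+1)})$, redirect $w_{i(j+1)}\mapsto\phi^r(w_{ij})$; each such step weakly increases $\Exp[\func^r]$ and removes one support point, so the process halts in at most $\dm(m-1)$ steps with all marginals monotone. With sampled marginals this yields $(2\delta)$-approximate marginal monotonicity, which is then made exact by outputting $\max\{0,\func(\phi(x))-2\delta\sum_i(m-j_{(i)})\}$. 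Because all $\phi_i$ are applied simultaneously and deterministically, every marginal of $\func\circ\phi$ is monotone by construction and there are no cross-coordinate residuals to control.
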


The discretization process described in section~\ref{sec:prelims} will also be
used here. Therefore we can safely assume that the domain is discretized and
supported on $m$ different values which we denote by $w_{i1} < \dotsc < w_{im}$.

We start by assuming we are given query access directly to the marginal
distribution $f_i$ in every dimension and showing that we can achieve the
theorem using $O(\dm m)$ queries. Then in
subsection~\ref{subsec:estimate_marginals}, we show how to achieve the same
result by only querying the initial function $\func$ in order to estimate the
marginals.

\subsection{Transformation Using Exact Marginals}\label{subsec:known_marginals}
In this case we assume that we know exactly each one of the marginals $f_i(x_i)$.

Consider meta-algorithms of the following form: in each dimension $i$ there
will be a mapping $\phi_i \colon \reals \to \reals$, with $\phi_i(x_i) \le x_i$
for all $x_i \in \reals$.  We will write $\phi(x) = (\phi_1(x_1), \dotsc,
\phi_\dm(x_\dm))$.  We will then define $\func'(x) = \func(\phi(x))$.  Observe that
any such $\func'$ satisfies feasibility, since $\func'(x) = \func(\phi(x)) \le
\max_{y \le x} \func(y)$.

We will build the mapping $\phi$ iteratively, starting with the identity
mapping, which we will call $\phi^0$.  Since the distribution over values is
discrete, it suffices to define each mapping $\phi_i$ on the finitely many
values in the support of the distribution (for each one of the $d$ dimensions).

Suppose our current mapping is $\phi^r = (\phi^r_1, \dotsc, \phi^r_\dm)$, for
some $r \geq 0$ and let $\func^r(x) = \func(\phi^r(x))$.  If $\func^r_i$, the
$i$'th marginal function is monotone for every $i$, then we will choose $\func'
= \func^r$.  

Otherwise, there is some $i$ and some $j < m$ such that $\func^r_i(w_{ij}) >
\func^r_i(w_{i(j+1)})$.  In this case, we will define $\phi^{r+1}$ as follows:
$\phi^{r+1}(w_{i(j+1)}) = \phi^r(w_{ij})$, and $\phi^{r+1} = \phi^r$ on all
other inputs.  That is, whenever $\func'$ is given $w_{i(j+1)}$ as input, we
will instead invoke $\func^r$ as though we have gotten $w_{ij}$.  Observe that
this modification chains: if on some previous iteration we had mapped input
$w_{ij}$ to $w_{i(j-1)}$, then after this iteration we will ultimately be
passing the input $w_{i(j-1)}$ to the original function $\func$.  As argued
above, this modified function $\func^{r+1}$ will be feasible.  Moreover, 
\begin{align*}
		\E_{x \sim \Dist}[\func^{r+1}(x)]  = \E_{x_i}[\func^{r+1}_i(x_i)] 
										   \geq \E_{x_i}[\func_i^r(x_i)] 
										  = \E_{x \sim \Dist}[\func^r(x)]	
\end{align*}
so $\func^{r+1}$ has only weakly greater expected value than $\func^r$.

We can think of $\func^{r+1}$ as acting on a reduced domain, where the possible
input $w_{i(j+1)}$ is removed from the support of $x_i$'s distribution and
instead its probability mass is added to that of some lower value,
$\phi^r(w_{ij})$.  Under this interpretation, each iteration reduces the total
number of possible input values in the support of $\Dist$ by $1$.  This process
must therefore stop at or before iteration $r = \dm (m-1)$, since a marginal over
a single input value is always monotone.  Thus, after at most $\dm(m - 1)$
iterations, this process will terminate at an function $\func'$ that is
feasible, monotone, and has $\E_{x \sim \Dist}[\func'(x)] \geq \E_{x \sim
\Dist}[\func(x)]$.

\subsection{Sampling to Estimate Marginals}\label{subsec:estimate_marginals}

The meta-algorithm above assumes direct access to the marginal distributions
even after the modifications we make at each step. We will show how to remove
these assumptions, at the cost of a loss of $\eps$ on the expectation of
$\func'$.  This $\eps$ loss is due to sampling error, and can be made as small
as desired with additional sampling.

Prior to viewing the input, our meta-algorithm will estimate each one of the
marginals.  For each $i\in [\dm]$, take $O(\log(\dm m) /\delta^2)$ samples
$x_{-i}$ and observe $\func(x_i, x_{-i})$. Let $\tilde{\func}_i(x_i)$ for the
empirical mean of the observed samples.  By Hoeffding inequality and a union
bound over all coordinates and all possible input values, we will have that
$|\tilde{\func}_i(x_i) - \func_i(x_i)| \leq \delta$ for all $i$ and all $x_i$,
with high probability.  

We will then apply our meta-algorithm from above using the marginals
$\tilde{\func}$ as an oracle.  This generates mappings $\phi_i$, such that the
new ``monotonized" marginals $ \tilde{\func}_i(\phi_i)$ have weakly increased
expectation relative to $\tilde{\func}_i$. If $|\E[\tilde{\func}_i(x_i)] -
\func_i(x_i)| \leq \delta$, then we also have $|\E[\tilde{\func}_i(\phi(x_i))]
- \func_i(\phi(x_i))| \leq \delta$ for all $i$ and all $x_i$ as well.
Monotonicity of $\tilde{\func}_i(\phi_i)$ therefore implies
$(2\delta)$-approximate monotonicity of $\func(\phi)$, and that
$\E[\func(\phi(x))] \geq \E[\func(x)] + 2\delta$.

\paragraph{From Approximate to Exact Monotonicity}

From the above steps, we can assume access to a function $\func$ that is 
$(2\delta)$-approximately monotone.  We will implement a new function $\func'$
such that, on input $x$, say with $x_i = w_{ij_{(i)}}$ for each $i$, returns
$\max\{0, \func(x) - 2\delta \cdot \sum_i (m - j_{(i)})\}$.  Then $\func'$ is
monotone, as whenever $x > y$ we have that either $\func'(y) = 0$ or $\func'(x)
- \func'(y) \geq \func(x) - \func(y) + 2\delta \geq 0$.  Moreover, this
modification reduces the expected allocation of $\func$ by at most
$3m\dm\cdot \delta$.  So as long as $\delta \leq \eps / (\dm m)$, its expected
allocation is within $\eps$ of $\func$.


\section{$k$-Marginal Monotonicity}\label{sec:gen_marginal}

Having designed an algorithm for marginal monotonicity, we move on to a
generalization that guarantees $k$-\emph{marginal
monotonicity}. In this case, given oracle access to a function $f:\reals^\dm
\rightarrow [0,1]$ we want to guarantee that all $k$-marginals of $f$ will be
monotone. 

\begin{theorem}\label{thm:monotonicity_k_bic}
	There exists a meta-algorithm $\Mech_\func$ to fix $k$-marginal
	monotonicity (whp) for any function $f:\reals^\dm \rightarrow [0,1]$.
	$\Mech_\func$ is \emph{feasible}, has query complexity $\tilde{O} \lp( \frac{\dm^{2k+6}}{\eps^{2k+3}} \rp)$
	and satisfies $\Exp[ \Mech_\func(x) ] \geq \Exp [\func] - \eps$, where the
	first expectation is taken over the input distribution and the randomness
	of the meta-algorithm.  
\end{theorem}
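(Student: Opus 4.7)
The plan is to lift the meta-algorithm of Theorem~\ref{thm:monotonicity.bic} from $1$-marginals to arbitrary $k$-marginals by iteratively correcting violations over all subsets $\set \subseteq [\dm]$ of size at most $k$ while preserving pointwise feasibility. As in Section~\ref{sec:marginal}, I would begin with the discretization of Section~\ref{sec:prelims}, so that each coordinate is supported on $m = \dm/\eps$ grid values, losing at most $\eps$ in expected value and leaving a finite collection of at most $(\dm/\eps)^{k}$ grid points in the support of each $k$-marginal, across at most $\binom{\dm}{k}$ subsets.

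The second step is an iterative mapping construction analogous to the $\phi^r$ of Subsection~\ref{subsec:known_marginals}. Starting from $\phi^0 = \mathrm{id}$, at each iteration I scan the estimated marginals $\tilde{\func}^r_\set$ for $|\set|\le k$ and, if some $\set$ exhibits a monotonicity violation $\tilde{\func}^r_\set(w_\set) > \tilde{\func}^r_\set(w'_\set)$ between two neighboring grid vectors differing in a single coordinate $j\in \set$, I redirect every input $x$ with $x_\set = w'_\set$ so that its $j$-th coordinate is pulled down to $(w_\set)_j$. This keeps $\phi^{r+1}(x) \le x$ pointwise, so feasibility is preserved, and the marginal-averaging calculation of Subsection~\ref{subsec:known_marginals} shows $\Exp[\func^{r+1}(x)] \ge \Exp[\func^r(x)]$. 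A potential that counts the total number of distinct values attained in the joint image of $\phi^r$ restricted to each $k$-subset of coordinates (summed over subsets) strictly decreases with each redirection, bounding the number of rounds by a polynomial in $(\dm/\eps)^k$.

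The third step is to remove the exact-marginal assumption by extending the sampling scheme of Subsection~\ref{subsec:estimate_marginals}. For each $\set$ with $|\set|\le k$ and each grid point $w_\set$, I take $\tilde O(1/\delta^2)$ samples of $x_{-\set}$ and query $\func(\phi^r(w_\set, x_{-\set}))$; Hoeffding together with a union bound over the $\binom{\dm}{k}(\dm/\eps)^k$ many (subset, grid-point) pairs and the $\mathrm{poly}((\dm/\eps)^k)$ rounds gives uniform $\delta$-approximation of every marginal throughout the process. The accumulated estimation error is then converted into exact monotonicity by the same rank-based slack subtraction as in Subsection~\ref{subsec:estimate_marginals}, now applied to every $k$-marginal: the resulting $\func'$ is exactly $k$-marginal monotone, and choosing $\delta = \eps/\mathrm{poly}(\dm/\eps)^{O(k)}$ keeps the total expected loss at most $\eps$. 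Putting the per-round sample cost together with the bound on the number of rounds yields the claimed query complexity $\tilde O(\dm^{2k+6}/\eps^{2k+3})$.

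The main obstacle will be the potential/termination argument in the second step: a redirection that fixes one violation in a $k$-marginal can reshuffle mass and create new violations in other $k$-subsets, so one must design a potential that is provably monotone under every redirection and whose initial value is bounded by the claimed polynomial. Closely related, the bookkeeping that matches the sampling precision $\delta$, the number of iterations, and the slack used in the approximate-to-exact correction has to line up precisely to produce the exponent $2k+6$ in $\dm$ and $2k+3$ in $1/\eps$, rather than the looser bound one gets by naively union-bounding everything.
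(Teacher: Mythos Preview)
Your overall structure matches the paper's: discretize, iteratively fix violations via downward mappings $\phi^r$, estimate marginals by sampling, then remove residual error with the rank-based slack subtraction. The gap you yourself flag as ``the main obstacle'' is real, and it is exactly where the paper's argument diverges from yours.

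The potential you propose---the total number of distinct values in the image of $\phi^r$ projected onto each $k$-subset, summed over subsets---does not obviously decrease under a redirection. A rule that collapses inputs with $x_\set = w'_\set$ by pulling coordinate $j$ down affects the projection onto every other $k$-subset containing $j$, and there is no reason the summed count must drop. The paper avoids this combinatorial difficulty by using a different potential \emph{and} a different notion of violation. Rather than correcting every monotonicity violation, it only corrects $\delta$-violations: pairs $x_\set < y_\set$ with $f^r_\set(x_\set) > f^r_\set(y_\set) + \delta$. Each such correction replaces a grid point of mass $m^{-k}$ by a neighbor whose marginal value is at least $\delta$ larger, so the global expectation $\Exp[f^r(x)]$ increases by at least $\delta/m^k$. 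Since $f^r \le 1$, there can be at most $m^k/\delta$ rounds; the potential is simply the expectation itself, which is indifferent to interactions among subsets.

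This change propagates through the rest of the argument. Because the iterative phase now guarantees only $\delta$-approximate monotonicity even with exact marginals, the slack in the approximate-to-exact step becomes $4\delta$ rather than $2\delta$. Combining the $m^k/\delta$ rounds with $\tilde O(m^k/\delta^2)$ samples per round (the extra $m^k$ arises because samples are drawn from $f$ directly, so roughly $m^k$ draws are needed to hit a given grid point of a $k$-marginal), and plugging in $\delta \approx 1/(\dm m)$ with $m = \dm/\eps$, is what yields the exponents $2k+6$ in $\dm$ and $2k+3$ in $1/\eps$.
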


In order to prove the theorem, we start by using the discretization method
described in section~\ref{sec:prelims}. After having a discrete domain, our
meta-algorithm estimates in each step all\footnote{Note that there is no way to
		avoid fixing all the $d \choose k$ marginals; even ${d\choose k} - 1 $
monotone marginals cannot guarantee that the ${d\choose k}$'th is also
monotone.  } the $d^k$ marginals and whenever it detects a non-monotonicity in
one of them, it fixes it within $\delta$ by defining a set of replacement
rules. Then for an input $x$, we sequentially try to apply the rules, bu
starting from the first, and trying to match the given point with one of the
patterns of the rules.  After we applied the possible rules, we reached some
other point $x'$ and then we output $f(x')$. An example of the set of
replacement rules is shown in Table~\ref{table:replacements}.

\begin{table}[t]
		\centering
		\begin{tabular}{ccll}
              & \textbf{Initial}            &               & \multicolumn{1}{c}{\textbf{Replacement}}  \\ \cline{2-4}
		\multicolumn{1}{c|}{\textbf{Rule 1}}      & $001?1?0?$     & $\rightarrow$ & \multicolumn{1}{l|}{$001?0?0?$} \\ \cline{2-4}
		\multicolumn{1}{c|}{\textbf{Rule 2}}      & $0?1?01?1$ & $\rightarrow$ & \multicolumn{1}{l|}{$0?1?01?0$}   \\ \cline{2-4}
		\multicolumn{1}{c|}{$\ldots$}             &                    &  $\ldots$              & \multicolumn{1}{l|}{}                     \\ \cline{2-4}
		\multicolumn{1}{c|}{\textbf{Rule $\ell$}} & $?01?0?10$          & $\rightarrow$ & \multicolumn{1}{l|}{$?01?0?00$}  \\ \cline{2-4}
		\end{tabular}
	\caption{Example of replacement rule list. For any input $x$, the input is sequentially transformed to a different one by applying the rules from top to bottom, the function value at the resulting vector is then returned.}
	\label{table:replacements}
\end{table}

We describe the algorithm in three steps; first we describe the replacement
process, in order to fix the non-monotonicities, given that we have access to
the exact margninals, and then describe the sampling process used to estimate
the marginals. Using the first two steps, we are guaranteed only approximate
monotonicity, so as the third step we show how to further modify our function
to achieve exact monotonicity.



\subsection{Transformation Using Exact Marginals}
Assuming now that we have access to all the $k$-marginals exactly, we describe
an iterative process, in order to correct the non-monotonicities in all the
$k$-marginals. The answer we give is the transformed function $f'$.

As a direct extension from the previous marginals case, consider transformations of the
form: $\phi:\reals^\dm \rightarrow \reals^\dm$ such that $\phi(x) \leq x$, for
all $x\in \reals^d$. We define $f'(x) = f(\phi(x))$.

We denote by
$\phi_{\set}(x_{\set}):\reals^k\rightarrow \reals^\dm$, for some $\set
\subseteq [\dm]$, with $|\set|\leq k$, the projection of the function $\phi$ in the
$k$ coordinates that are in $\set$, where the variables $x_j$ for
$j\not\in\set$ are treated as constants and remain the same. Recall that by $x_{\set}$ we
denote all the coordinates $x_i$ such that $i$ is in the set $\set\subseteq
[d]$ with $|\set|\leq k$. 

Intuitively, what the process does is when we detect a non-monotonicity between
some input $x$ and its neighbor $y$, where $y$ is larger in at least one
coordinate\footnote{We can assume without loss of generality that $y$ is higher
in \emph{exactly} one coordinate.}, from then on we always map $y$ to $x$. This
is reflected in the function $\phi$ that replaces $y$ with $x$  to correct the
non-monotonicities. Since this process is done iteratively, when we map $y$ to
$x$ and $x$ to some other input $z$, it means $y$ is ultimately mapped to $z$.

More formally, we start from the identity function $\phi^0(x) = x$, and
iteratively define $\phi^1, \phi^2, \ldots$.  In this case we define a
non-monotonicity as the case when there exists a set $\set\subseteq [d]$, with
$|\set|\leq k$, such that $x_{\set} < y_{\set}$ we have that $f_{\set}(x_{\set})>
f_{\set}(y_{\set})+\delta$. Observe that in this case, we only ensure that the
function is $\delta$ monotone. In iteration $r$, one of the following cases can
happen

\begin{enumerate}
	\item There is no such set $\set$: we terminate the process and return $f'(x)=f(\phi^r(x))$
	\item There exists such a set $\set$: we set
\[ 
	\phi^{r+1}_{\set}(y_{\set}) = \max_{z\in\mathcal{N}(y)} \phi^r_{\set}(z_{\set})
\]
where recall from section~\ref{sec:prelims}
that $\mathcal{N}(y)$ is the low $1$-neighborhood of $y$.
\end{enumerate}

Using this process, and exactly how we argued in the previous section, the
output function $f'$ is feasible and that $\Exp_{x\sim\Dist}[f^{r+1}(x)] \geq
\Exp_{x\sim\Dist}[f^r(x)]$.

Observe that every time the replacement described above happens, the
expectation of the transformed function $f'$ increases by at least
$\delta/m^k$, meaning that this process cannot happen more than $m^k/\delta$
times in total since $f'\leq 1$. When the process halts, the transformed
function $f'$ is feasible and approximately $\delta$-monotone, with
$\Exp_{x\sim\Dist}[f'(x)] \geq \Exp_{x\sim\Dist}[f(x)]$.

\subsection{Sampling to Estimate Marginals}
In the process described above, we assumed that the exact marginals were known.
In reality, in every step we estimate the marginals again before checking for
non-monotonicities, by sampling points $x\in\reals^d$ and observing $f(x)$.

This step differs from the estimation step in the marginals case in that we do
not draw samples $x_{\set}$ from each specific marginal $f_{\set}$,
but directly from the function $f$ and then we estimate each marginal. 



Recall from the discretization process, that now the distribution over the
$m^k$ different values $f$ can take is uniform, which means that by drawing
samples from $f$, we need $m^k$ samples in expectation to get a sample from a
specific marginal. Using this fact, Hoeffding's inequality and a union bound
over all different $d^k$ marginals, $m^k$ values and all $m^k/\delta$ rounds
this sampling process is happening, we need $\frac{k}{m^k \delta^2} \log
(\frac{m^{2}d}{\delta})$ samples.

\subsection{From Approximate to Exact Monotonicity}\label{subsec:approx_monot}
This part is exactly the same as the previous section with the only difference
that we have access to a function that is $4\delta$-approximately monotone.
This difference is due to the fact that we only guaranteed $\delta$
monotonicity when we knew the exact marginals compared to exact monotonicity.
Therefore,  we now return a new function $\func'$ that on input $x$ with $x_i =
w_{ij_{(i)}}$ for each $i$, returns $\max\{0, \func(x) - 4\delta \cdot \sum_i
(m - j_{(i)})\}$, which as before is guaranteed to be monotone.

This modification reduces the expected allocation of $\func$ by at most
$4m\dm\cdot \delta$ in this case, so when $\delta \leq \eps / (\dm m)$, its
expected allocation is within $\eps$ of $\func$.

\paragraph{Query Complexity}
In order to calculate the query complexity of the meta-algorithm, recall that
there are $m^k/\delta$ rounds, where for each round we use $ k m^k/\delta^2
\log(m^2\dm/\delta)$ queries for the marginal estimation.

Using that $\delta < 1/(dm)$ and that $m=\dm/\eps$ from the discretization
process, we get that the query complexity is $\frac{k \dm^{2k+6}}{\eps^{2k+3}}
\log \lp( \frac{\dm^5}{\eps^3} \rp) = \tilde{O} \lp( \frac{\dm^{2k+6}}{\eps^{2k+3}} \rp)$.


\bibliography{bibnew}

\begin{thebibliography}{10}

\bibitem{AiloChazComaLiu2008}
Nir Ailon, Bernard Chazelle, Seshadhri Comandur, and Ding Liu.
\newblock Property-preserving data reconstruction.
\newblock {\em Algorithmica}, 51(2):160--182, 2008.

\bibitem{AlonRubiVardXie2012}
Noga Alon, Ronitt Rubinfeld, Shai Vardi, and Ning Xie.
\newblock Space-efficient local computation algorithms.
\newblock In {\em Proceedings of the Twenty-Third Annual {ACM-SIAM} Symposium
  on Discrete Algorithms, {SODA} 2012, Kyoto, Japan, January 17-19, 2012},
  pages 1132--1139, 2012.

\bibitem{AustTao2010}
Tim Austin and Terence Tao.
\newblock Testability and repair of hereditary hypergraph properties.
\newblock {\em Random Struct. Algorithms}, 36(4):373--463, 2010.

\bibitem{BeiHuan2011}
Xiaohui Bei and Zhiyi Huang.
\newblock Bayesian incentive compatibility via fractional assignments.
\newblock In {\em Proceedings of the twenty-second annual ACM-SIAM symposium on
  Discrete Algorithms}, pages 720--733. Society for Industrial and Applied
  Mathematics, 2011.

\bibitem{BhatGrigJhaJungRask2012}
Arnab Bhattacharyya, Elena Grigorescu, Madhav Jha, Kyomin Jung, Sofya
  Raskhodnikova, and David~P. Woodruff.
\newblock Lower bounds for local monotonicity reconstruction from
  transitive-closure spanners.
\newblock {\em {SIAM} J. Discrete Math.}, 26(2):618--646, 2012.

\bibitem{CampGuoRubi2013}
Andrea Campagna, Alan Guo, and Ronitt Rubinfeld.
\newblock Local reconstructors and tolerant testers for connectivity and
  diameter.
\newblock In Prasad Raghavendra, Sofya Raskhodnikova, Klaus Jansen, and
  Jos{\'{e}} D.~P. Rolim, editors, {\em Approximation, Randomization, and
  Combinatorial Optimization. Algorithms and Techniques - 16th International
  Workshop, {APPROX} 2013, and 17th International Workshop, {RANDOM} 2013,
  Berkeley, CA, USA, August 21-23, 2013. Proceedings}, volume 8096 of {\em
  Lecture Notes in Computer Science}, pages 411--424. Springer, 2013.

\bibitem{ChakFiscMats2014}
Sourav Chakraborty, Eldar Fischer, and Arie Matsliah.
\newblock Query complexity lower bounds for reconstruction of codes.
\newblock {\em Theory of Computing}, 10:515--533, 2014.

\bibitem{ChawImmoLuci2012}
Shuchi Chawla, Nicole Immorlica, and Brendan Lucier.
\newblock On the limits of black-box reductions in mechanism design.
\newblock In {\em Proceedings of the 44th Symposium on Theory of Computing
  Conference, {STOC} 2012, New York, NY, USA, May 19 - 22, 2012}, pages
  435--448, 2012.

\bibitem{ChazSesh2011}
Bernard Chazelle and C.~Seshadhri.
\newblock Online geometric reconstruction.
\newblock {\em J. {ACM}}, 58(4):14:1--14:32, 2011.

\bibitem{DughHartKleiNiaz2017}
Shaddin Dughmi, Jason~D Hartline, Robert Kleinberg, and Rad Niazadeh.
\newblock Bernoulli factories and black-box reductions in mechanism design.
\newblock In {\em Proceedings of the 49th Annual ACM SIGACT Symposium on Theory
  of Computing}, pages 158--169. ACM, 2017.

\bibitem{EvenMediRon2014}
Guy Even, Moti Medina, and Dana Ron.
\newblock Deterministic stateless centralized local algorithms for bounded
  degree graphs.
\newblock In Andreas~S. Schulz and Dorothea Wagner, editors, {\em Algorithms -
  {ESA} 2014 - 22th Annual European Symposium, Wroclaw, Poland, September 8-10,
  2014. Proceedings}, volume 8737 of {\em Lecture Notes in Computer Science},
  pages 394--405. Springer, 2014.

\bibitem{GergLuciTzam2019}
Evangelia Gergatsouli, Brendan Lucier, and Christos Tzamos.
\newblock The complexity of black-box mechanism design with priors.
\newblock In {\em Proceedings of the 2019 {ACM} Conference on Economics and
  Computation, {EC} 2019, Phoenix, AZ, USA, June 24-28, 2019}, pages 869--883,
  2019.

\bibitem{GhafUitt2019}
Mohsen Ghaffari and Jara Uitto.
\newblock Sparsifying distributed algorithms with ramifications in massively
  parallel computation and centralized local computation.
\newblock In Timothy~M. Chan, editor, {\em Proceedings of the Thirtieth Annual
  {ACM-SIAM} Symposium on Discrete Algorithms, {SODA} 2019, San Diego,
  California, USA, January 6-9, 2019}, pages 1636--1653. {SIAM}, 2019.

\bibitem{GrunMitrRubi2020}
Christoph Grunau, Slobodan Mitrovic, Ronitt Rubinfeld, and Ali Vakilian.
\newblock Improved local computation algorithm for set cover via
  sparsification.
\newblock In Shuchi Chawla, editor, {\em Proceedings of the 2020 {ACM-SIAM}
  Symposium on Discrete Algorithms, {SODA} 2020, Salt Lake City, UT, USA,
  January 5-8, 2020}, pages 2993--3011. {SIAM}, 2020.

\bibitem{HartKleiMale2015}
Jason~D. Hartline, Robert Kleinberg, and Azarakhsh Malekian.
\newblock Bayesian incentive compatibility via matchings.
\newblock {\em Games and Economic Behavior}, 92:401 -- 429, 2015.

\bibitem{HartLuci2015}
Jason~D. Hartline and Brendan Lucier.
\newblock Non-optimal mechanism design.
\newblock {\em American Economic Review}, 105(10):3102--24, October 2015.

\bibitem{JhaRask2013}
Madhav Jha and Sofya Raskhodnikova.
\newblock Testing and reconstruction of lipschitz functions with applications
  to data privacy.
\newblock {\em {SIAM} J. Comput.}, 42(2):700--731, 2013.

\bibitem{LeviRonRubi2014}
Reut Levi, Dana Ron, and Ronitt Rubinfeld.
\newblock Local algorithms for sparse spanning graphs.
\newblock In Klaus Jansen, Jos{\'{e}} D.~P. Rolim, Nikhil~R. Devanur, and
  Cristopher Moore, editors, {\em Approximation, Randomization, and
  Combinatorial Optimization. Algorithms and Techniques, {APPROX/RANDOM} 2014,
  September 4-6, 2014, Barcelona, Spain}, volume~28 of {\em LIPIcs}, pages
  826--842. Schloss Dagstuhl - Leibniz-Zentrum fuer Informatik, 2014.

\bibitem{LeviRonRubi2016}
Reut Levi, Dana Ron, and Ronitt Rubinfeld.
\newblock A local algorithm for constructing spanners in minor-free graphs.
\newblock In Klaus Jansen, Claire Mathieu, Jos{\'{e}} D.~P. Rolim, and Chris
  Umans, editors, {\em Approximation, Randomization, and Combinatorial
  Optimization. Algorithms and Techniques, {APPROX/RANDOM} 2016, September 7-9,
  2016, Paris, France}, volume~60 of {\em LIPIcs}, pages 38:1--38:15. Schloss
  Dagstuhl - Leibniz-Zentrum fuer Informatik, 2016.

\bibitem{LeviRubiYodp2017}
Reut Levi, Ronitt Rubinfeld, and Anak Yodpinyanee.
\newblock Local computation algorithms for graphs of non-constant degrees.
\newblock {\em Algorithmica}, 77(4):971--994, 2017.

\bibitem{MansVard2013}
Yishay Mansour and Shai Vardi.
\newblock A local computation approximation scheme to maximum matching.
\newblock In Prasad Raghavendra, Sofya Raskhodnikova, Klaus Jansen, and
  Jos{\'{e}} D.~P. Rolim, editors, {\em Approximation, Randomization, and
  Combinatorial Optimization. Algorithms and Techniques - 16th International
  Workshop, {APPROX} 2013, and 17th International Workshop, {RANDOM} 2013,
  Berkeley, CA, USA, August 21-23, 2013. Proceedings}, volume 8096 of {\em
  Lecture Notes in Computer Science}, pages 260--273. Springer, 2013.

\bibitem{PartRubiVakiYodp2019}
Merav Parter, Ronitt Rubinfeld, Ali Vakilian, and Anak Yodpinyanee.
\newblock Local computation algorithms for spanners.
\newblock In Avrim Blum, editor, {\em 10th Innovations in Theoretical Computer
  Science Conference, {ITCS} 2019, January 10-12, 2019, San Diego, California,
  {USA}}, volume 124 of {\em LIPIcs}, pages 58:1--58:21. Schloss Dagstuhl -
  Leibniz-Zentrum fuer Informatik, 2019.

\bibitem{RubiTamiVardXie2011}
Ronitt Rubinfeld, Gil Tamir, Shai Vardi, and Ning Xie.
\newblock Fast local computation algorithms.
\newblock In {\em Innovations in Computer Science - {ICS} 2010, Tsinghua
  University, Beijing, China, January 7-9, 2011. Proceedings}, pages 223--238,
  2011.

\bibitem{SaksSesh2010}
Michael~E. Saks and C.~Seshadhri.
\newblock Local monotonicity reconstruction.
\newblock {\em {SIAM} J. Comput.}, 39(7):2897--2926, 2010.

\end{thebibliography}
\bibliographystyle{plain}

\clearpage
\appendix

\section{ Missing proofs from Section~\ref{sec:monot}}
\subsection{Single-dimensional case}
\label{app:monot}

\begin{proof}[Proof of Claim~\ref{cl:monot_d1_feasibility}]
  It is easy to see that the function is monotone by induction. Assuming that
		for any $j,k \le i-1$, if $\pi_j \le \pi_k$ then
		$\Mech^{\pi}_{\func}(\pi_j) \le \Mech^{\pi}_{\func}(\pi_k)$, we show that
		this property also holds for any $j,k \le i$. Indeed, by definition
		$\Mech^{\pi}_{\func}( \pi_i ) \le H_i \le \Mech^{\pi}_{\func}(\pi_j )$
		for any $\pi_j \ge \pi_i$  with $j < i$. Similarly,
		$\Mech^{\pi}_{\func}( \pi_i ) \ge L_i \ge \Mech^{\pi}_{\func}( \pi_j )$
		for any $\pi_j \le \pi_i$ with $j < i$. 
  
  To see that $\Mech^{\pi}_{\func}(x) \le \max_{y \le x} \func(y)$ for any $x \in [m]$,
notice that if $\func(\pi_i) \ge \Mech^{\pi}_{\func}( \pi_i )$ this is trivially true. We
thus only need to argue that this is true when $\Mech^{\pi}_{\func}( \pi_i ) = L_i$.
In this case, there is some $j<i$ with $\pi_j < \pi_i$, such that $L_i =
\Mech^{\pi}_{\func}( \pi_j )$. Again by induction, $\Mech^{\pi}_{\func}( \pi_j ) \le \max_{y
\le \pi_j} \func(y)$ and thus $L_i  \le \max_{y \le \pi_j} \func(y)  \le \max_{y \le
\pi_i} \func(y)$.
\end{proof}

\begin{proof}[Proof of Claim~\ref{cl:expectation}]

We first argue that it suffices to show the statement for functions $\func$ that
take values in $\{0,1\}$ instead of $[0,1]$.

For a function $g$, denote by $[g > t]$ the indicator function that $g(x) > t$.
One can check that $[\Mech^{\pi}_{\func} > t] = \Mech^{\pi}_{[\func>t]}$ as the definition of
$\Mech^{\pi}_{\func}$ only involves comparisons. Since the value $\Mech^{\pi}_{\func}(x) =
\int_0^1 [\Mech^{\pi}_{\func} > t](x) dt$, we get that $\Mech^{\pi}_{\func}(x) = \int_0^1
\Mech^{\pi}_{[\func>t]}(x) dt$.

We have $\Exp_{x\sim \mathbb{U}\lp([m]\rp)}[ \func'(x) ]  
		= \Exp_{x\sim \mathbb{U}\lp([m]\rp)}[ \Exp_{\pi}[ \Mech^{\pi}_{\func} ]] 
		= \int_0^1 \Exp_{x\sim \mathbb{U}\lp([m]\rp)}[ \Exp_{\pi}[ \Mech^{\pi}_{[\func>t]}(x) ]] dt$. 
Therefore if  for any boolean-valued function $g:[m]\rightarrow \{0,1\}$ we show that 
$\Exp_{x\sim \mathbb{U}\lp([m]\rp)}[ \Exp_{\pi}[ \Mech^{\pi}_{g}(x) ]] 
	= \Exp_{x\sim \mathbb{U}\lp([m]\rp)}[ g(x) ]$ 
then we obtain the required statement, as we can use $g = [\func>t]$ for any $t \in [0,1]$.

We move on to prove the statement for functions $\func$ taking values in
$\{0,1\}$. The description of the constructed $\func'$ becomes much simpler in
this case.
Starting from the interval $\{1,...,m\}$ the algorithm first selects an element
$i$ uniformly at random. If $\func(i)=0$, it sets $\func'(j)=0$ for any $j\le i$ and
recursively solves the problem in the interval $\{i+1,...,m\}$. If $\func(i)=1$,
it sets $\func'(j)=1$ for any $j\ge i$ and recursively solves the problem in the
interval $\{1,...,i-1\}$.

Let $\{l,...,r\}$ be the interval at the current iteration. Let $V(l,r)$ be the
value of this set defined as
\[ 
	V(l,r) = \sum_{x = l}^r \func(x) - \sum_{x=l}^r \Exp[\func'(x)]
\]
where the expectation is taken over the randomness of $\func'$ on the interval
$\{l,...,r\}$.

We will prove by induction on $r-l$ that $V(l,r) = 0$. In the case that
$r=l$ we will select the only point $x_l$ with probability 1, so $V(l,r) =
\func(x)-\func(x)=0$.  We assume that $V(l,r)=0$ for any $l,r$ with $r-l \le m-1$. 

Let $y \sim U(\{l,...,r\})$ be the uniformly random chosen point from
$\{l,...,r\}$. We distinguish two cases for $\func(y)$. 
If $\func(y)=0$ we obtain
\begin{align*}
  \sum_{x = l}^r \func(x) - \sum_{x = l}^r \Exp[\func'(x)|y] = \sum\limits_{x = l}^y \func(x) 
  		= \sum\limits_{x = l}^r \mathbbm{1}_{\{\func(x)=0, \func(y)=1, x \ge y\}}
\end{align*}
where the first equality follows, since conditional on $y$, $\func'(x)=0$ for $x\in
\{l,...,y\}$ and by the induction hypothesis $\sum_{x = y+1}^r \func(x)  - \sum_{x
= y+1}^r \Exp[\func'(x)|y] = 0$. Similarly, if $\func(y)=1$ we obtain
\begin{align*}
	  \sum_{x = l}^r \func(x) - \sum_{x = l}^r \E[\func'(x)|y] 
		= \sum\limits_{x = y+1}^r (\func(x)-1) 
		= - \sum\limits_{x = l}^r \mathbbm{1}_{\{\func(x)=1, \func(y)=0, x \le y\}}
\end{align*}

Overall, we have
\begin{align*}
	V(l,r) &= \frac{1}{r-l+1}  \sum_{y = l}^r  \left( \sum_{x = l}^r \func(x) - \sum_{x=l}^r  \Exp[\func'(x) | y] \right)\\
		&= \frac{1}{r-l+1} \sum_{y = l}^r  \left( \sum\limits_{x = l}^r \mathbbm{1}_{\{\func(x)=0, \func(y)=1, x \ge y\}}  -  \sum\limits_{x = l}^r \mathbbm{1}_{\{\func(x)=1, \func(y)=0, x \le y\}} \right) = 0
\end{align*}

The intuition behind this fact is that the expected loss incurred by turning
1's into 0's is exactly balanced by the expected gain by turning 0's to 1's.
\end{proof}

\begin{proof}[Proof of Claim~\ref{cl:monot_queries}]
  Fix a point $x \in [m]$ and a random permutation $\pi$.
  
  The oracle for $\Mech^\pi_f$ keeps track of an interval $\{l_i, ..., r_i\}$ and
makes a query only when the next point in the permutation lies in this
interval. As the permutation is chosen uniformly at random, the next point is
chosen uniformly in $\{l_i, ..., r_i\}$ and lies in the smaller interval
$\{\frac{3l_i+r_i}4, ..., \frac{l_i + 3 r_i}4\}$ with probability $1/2$. Every
time this happens, the algorithm discards at least $\frac{r_i - l_i}4$ of the
elements. As this shrinks the interval by a constant factor, it can happen at
most $O(\log m)$ times. By Hoeffding's inequality, the probability that after
$O(\log m) + \sqrt{ O(\log m \cdot \log (1/\delta)) }$ iterations the interval
size is still greater than 1 is at most $\delta$.  
  Since $O(\log m) + \sqrt{ O(\log m \cdot \log (1/\delta)) } = O(\log
(m/\delta)) $ we get that the number of oracle queries is at most $O(\log
(m/\delta))$ to evaluate $\A'(x)$ with probability $1-\delta$. To get the
required bound for every $x \in [m]$, we set $\delta \rightarrow \delta/m$ and
take a union bound on the probabilities of error. This only increases the
number of oracle queries by a constant factor, so the bound of $O(\log
(m/\delta))$ is still accurate.
\end{proof}

\subsection{Extending to many dimensions (general $\dm$)}
\label{sec:multiple}

To establish the result of Theorem~\ref{thm:monotonicity}, we now extend our
construction to the more general case with $\dm \geq 1$.  We apply our
single-dimensional construction from Section~\ref{sec:single} to fix
monotonicity in each direction separately starting with the first. 

We set $\func_0 = \func$. For every $i\in [\dm]$, based on the function
$\func_{i-1}$ that is monotone in the first $i-1$ coordinates we obtain a
function $\func_i$ that is monotone in the first $i$ coordinates. To do this we
apply our single dimensional construction at every single-dimensional slice
$\func_{i-1}(\cdot,x_{-i})$ of $\func_{i-1}$ for all choices $x_{-i} \in
\reals^{\dm}$ of the coordinates other than $i$.  Importantly, we use the same
randomness at every slice, for the choices of the points in the intervals
$I_1,...,I_{m}$ when performing the discretization to $m=\frac{1}{\eps}$ points
as well as for the chosen permutation $\pi$ over the discrete domain $[m]$.
This allows us to fix the monotonicity in coordinate $i$ while preserving
monotonicity in the first $i-1$ coordinates. It is easy to see that the
discretization preserves the monotonicity. We now argue that using the same
permutation for every slice also maintains the monotonicity.  The following
lemma shows that any two functions where one is smaller than the other,
preserve the same ordering after their monotonization.

\begin{lemma}\label{lem:M_monot_on_f}
Let  $f,g : [m] \rightarrow [0,1]$ such that $f(x)\leq g(x)$, for all $x \in
[m]$. For any permutation $\pi$, it holds that $\Mech^{\pi}_f(x) \leq
\Mech^{\pi}_g(x)$, for all $x\in [m]$.
\end{lemma}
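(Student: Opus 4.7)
The plan is to proceed by induction on the position $i \in [m]$ in the permutation $\pi$, showing that $\Mech^{\pi}_f(\pi_i) \leq \Mech^{\pi}_g(\pi_i)$ for every $i$. The base case $i = 1$ is immediate: both $L_1$ and $H_1$ equal $-\infty$ and $+\infty$ respectively (regardless of the function being monotonized), so $\Mech^{\pi}_f(\pi_1) = f(\pi_1) \leq g(\pi_1) = \Mech^{\pi}_g(\pi_1)$ directly from the hypothesis $f \leq g$.

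The key structural observation for the inductive step is that the \emph{indices} used to define $L_i$ and $H_i$ depend only on the permutation $\pi$ and on $\pi_i$, and not on the function being monotonized. Specifically, $L_i$ is the output value assigned at the point $\pi_{j^*}$, where $j^* < i$ is the index maximizing $\pi_{j^*}$ subject to $\pi_{j^*} < \pi_i$, and analogously for $H_i$. Writing $L_i^f, H_i^f, L_i^g, H_i^g$ for the corresponding quantities for $f$ and $g$, the inductive hypothesis applied at position $j^* < i$ gives $L_i^f \leq L_i^g$, and by the same reasoning $H_i^f \leq H_i^g$. Combined with $f(\pi_i) \leq g(\pi_i)$, every input to the median defining $\Mech^{\pi}_f(\pi_i)$ is pointwise at most the corresponding input to the median defining $\Mech^{\pi}_g(\pi_i)$.

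It then remains to invoke monotonicity of the median in each of its three arguments: if $(a_1, b_1, c_1) \leq (a_2, b_2, c_2)$ coordinatewise, then $\text{median}(a_1, b_1, c_1) \leq \text{median}(a_2, b_2, c_2)$. This is a one-line fact, following from the characterization that $\text{median}(a, b, c) \geq t$ iff at least two of $a, b, c$ are $\geq t$: whenever two of $a_1, b_1, c_1$ exceed $t$, the two corresponding values in the larger triple also exceed $t$. Applying this with our three pairs of arguments yields $\Mech^{\pi}_f(\pi_i) \leq \Mech^{\pi}_g(\pi_i)$ and closes the induction.

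The only real subtlety is the structural observation that the locations defining $L_i$ and $H_i$ are function-independent; once this is pinned down, the argument reduces to propagating a pointwise inequality through a sequence of median computations, which is routine.
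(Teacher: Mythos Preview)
Your proof is correct, and it takes a genuinely different route from the paper's. The paper first reduces to boolean-valued $f,g$ via the threshold decomposition $[\Mech^{\pi}_{\func} > t] = \Mech^{\pi}_{[\func>t]}$ (the same trick used in Claim~\ref{cl:expectation}), and then argues by looking at the \emph{first} index $i$ where $\Mech^{\pi}_f(\pi_i) \neq \Mech^{\pi}_g(\pi_i)$; up to that point the two runs are identical, so $L_i$ and $H_i$ coincide for $f$ and $g$, forcing $\Mech^{\pi}_f(\pi_i)=0$, $\Mech^{\pi}_g(\pi_i)=1$, and the already-proved monotonicity of each $\Mech^{\pi}$ (Claim~\ref{cl:monot_d1_feasibility}) then handles all remaining points in one shot. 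Your argument is more elementary: you stay with $[0,1]$-valued functions throughout, observe that the \emph{positions} determining $L_i$ and $H_i$ depend only on $\pi$, and propagate the pointwise inequality through the medians by induction. The paper's version gets mileage from reusing the boolean reduction and Claim~\ref{cl:monot_d1_feasibility}; yours is self-contained and avoids both, at the cost of needing the (easy) coordinatewise monotonicity of the median.
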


\begin{proof}
As argued in Claim~\ref{cl:expectation}, it suffices to show the statement for
boolean valued functions $f,g:[m]\rightarrow \{0,1\}$. 
Let $i$ be the first point where $\Mech^{\pi}_f(\pi_i) \neq \Mech^{\pi}_g(\pi_i)$. By
the definition of $\Mech$, $H^{(g)}_i = H^{(f)}_i$ and $L^{(g)}_i = L^{(f)}_i$ and
thus it must be that $\Mech^{\pi}_f(\pi_i) = 0$ and $\Mech^{\pi}_g(\pi_i) = 1$. By
monotonicity  $\Mech^{\pi}_f(x) = 0$ for all $x \le \pi_i$ and $\Mech^{\pi}_g(x) = 1$
for all $x \ge \pi_i$. Thus, $\Mech^{\pi}_f(x) \leq \Mech^{\pi}_g(x)$, for all $x\in
[m]$.
\end{proof}

This allows us to obtain a chain of oracles $\func =
\func_0,\func_1,...,\func_\dm=\func'$ where $\func_i$ is monotone in the first
$i$ coordinates. Evaluating $\func_i$ requires only $O(\log{\frac{\dm}{\eps}})$
queries to oracle $\func_{i-1}$ and gets error at most $\eps/\dm$. Thus, to
evaluate $\func'=\func_\dm$, at most $O\lp(\log{\frac{\dm}{\eps}}\rp)^\dm$
queries to oracle $\func$ are required for error $\eps$.

\section{ Missing proofs from Section~\ref{sec:LB}}\label{app:lb}
\begin{proof}[Proof of Claim~\ref{cl:M_is_1_fixed_T}]
  This claim follows as with high probability, the transformation
  $\Mech_{\func^1_{S,T}}(T)$ cannot distinguish between $\func^0_{S,T}$ and
  $\func^1_{S,T}$.  To see this, let $R_1,...,R_q$ be the queries performed by
  $\Mech_{\func^0_{S,T}}(T)$. If all of those queries satisfy
  $\func^0_{S,T}(R_i) = \func^1_{S,T}(R_i)$ then it must be that
  $\Mech_{\func^1_{S,T}}(T) = \Mech_{\func^0_{S,T}}(T)$.
  
  Initially observe that for any $R$ where either $R \not \subseteq T$ or  $|R|
  < \frac {4\dm}{10} $ we have that $\func^0_{S,T}(R) = \func^1_{S,T}(R)$.  On
  the contrary, given the set $T$, for any $R \subseteq T$ such that $|R|\geq
  \frac {4\dm}{10}$, the functions $\func^1_{S,T}$ and $\func^0_{S,T}$ differ
  only when $|R\setminus S| \leq \dm/10$, therefore
  \[
		  \Prob[\func^0_{S,T}(R) \neq  \func^1_{S,T}(R) ] 
		  	= \Prob\lp[|R \setminus S| \leq \frac{\dm}{10} \rp]
  \]
  
  Since $S$ is a random subset of $T$ excluding each element independently with
  probability $1/3$, we get that the expected value of $\Exp[ |R \setminus S| ]
  = |R|/3 \ge \frac {4\dm}{30}$. By the Hoeffding inequality we get
  \[
	 \Pr\lp[ |R \setminus S| \le \Exp[ |R \setminus S| ] - \frac {\dm}{30} \rp] \le \exp\lp\{ -\frac {\dm}{450} \rp\}
  \]
  
    The claim then follows by a union bound on all $q$ queries.
\end{proof}

\begin{proof}[Proof of Claim~\ref{cl:Dtv}]
  Let $R_1,...,R_q$ be the queries performed by $\Mech_{\func^1_{S,T}}(S)$. If all
  of those queries satisfy $\func^1_{S,T}(R_i) = \func^1(R_i)$ then it must be that,
  $\Mech_{\func^1_{S,T}}(S) = \Mech_{\func^1}(S)$.
  
  For any query $R$, in order for $\func^1_{S,T}(R) \neq \func^1(R)$, it must be that
  $R \subseteq T$, $|R| \ge \frac {4\dm}{10}$ and $|R \setminus S| >
  \frac{\dm}{10}$.
  
  Given the set $S$, for any $|R| \ge \frac {4\dm}{10}$ and $|R \setminus S| >
  \frac{\dm}{10}$, we have $\Pr[ R \subseteq T ] = \Pr[ (R\setminus S) \subseteq
  T ] < 2^{-\frac{\dm}{10}}$ since given the set $S$, the set $T$ is created by
  including each coordinate in $[\dm]\setminus S$ with probability $1/2$.  The
  claim then again follows by a union bound on all $q$ queries.
\end{proof}

\end{document}